\numberwithin{equation}{section}
\theoremstyle{plain}
\newtheorem{lemma}{Lemma}[section]
\newtheorem{theorem}{Theorem}[section]
\newtheorem{corollary}{Corollary}[section]
\newtheorem{definition}{Definition}[section]
\newtheorem{example}{Example}[section]
\begin{document}

\def\ci{\!\perp\!}
\def\nci{\!\not\perp\!}
\def\de{\!\sim\!}

\begin{frontmatter}
\title{Reading Dependencies from Covariance Graphs}
\runtitle{Reading Dependencies from Covariance Graphs}

\begin{aug}
\author{\fnms{Jose M.} \snm{Pe\~{n}a}\ead[label=e1]{jose.m.pena@liu.se}}

\runauthor{J. M. Pe\~{n}a}

\affiliation{ADIT, Department of Computer and Information Science\\
        Link\"oping University, SE-58183 Link\"{o}ping, Sweden\\
\printead{e1}\\
\phantom{E-mail: jose.m.pena@liu.se}}

\end{aug}

\begin{abstract}
The covariance graph (aka bi-directed graph) of a probability distribution $p$ is the undirected graph $G$ where two nodes are adjacent iff their corresponding random variables are marginally dependent in $p$.\footnote{It is worth mentioning that our definition of covariance graph is somewhat non-standard. The standard definition states that the lack of an edge between two nodes of $G$ implies that their corresponding random variables are marginally independent in $p$. This difference in the definition is important in this paper.} In this paper, we present a graphical criterion for reading dependencies from $G$, under the assumption that $p$ satisfies the graphoid properties as well as weak transitivity and composition. We prove that the graphical criterion is sound and complete in certain sense. We argue that our assumptions are not too restrictive. For instance, all the regular Gaussian probability distributions satisfy them.
\end{abstract}

\begin{keyword}
\kwd{chain graphs}
\kwd{concentration graphs}
\kwd{covariance graphs}
\end{keyword}


\end{frontmatter}

\section{Introduction}\label{sec:intro}

The covariance graph (aka bi-directed graph) of a probability distribution $p$ is the undirected graph $G$ where two nodes are adjacent iff their corresponding random variables are marginally dependent in $p$. Covariance graphs were introduced in \citep{CoxandWermuth1993} to represent independence models. Since then, they have received considerable attention. See, for instance, \citep{BanerjeeandRichardson2003,Chaudhurietal.2007,CoxandWermuth1996,DrtonandRichardson2003,DrtonandRichardson2008,Kauermann1996,Lupparellietal.2009,MaloucheandRajaratnam2009,PearlandWermuth1994,Richardson2003,Wermuth1995,WermuthandCox1998,Wermuthetal.2006,Wermuth2011,Wermuth2012}. The works \citep{BanerjeeandRichardson2003,Kauermann1996} are particularly important for the interpretation of covariance graphs in terms of independencies. Specifically, these works introduce a graphical criterion for reading independencies from the covariance graph $G$ of a probability distribution $p$, under the assumption that $p$ satisfies the graphoid properties and composition. In this paper, we show that $G$ can also be used to read dependencies holding in $p$. Specifically, we present a graphical criterion for reading dependencies from $G$ under the assumption that $p$ satisfies the graphoid properties, weak transitivity and composition. We also prove that our graphical criterion is sound and complete. Here, complete means that it is able to read all the dependencies in $p$ that can be derived by applying the graphoid properties, weak transitivity and composition to the dependencies used in the construction of $G$ and the independencies obtained from $G$. We also show that there exist important families of probability distributions that satisfy the graphoid properties, weak transitivity and composition. These include, for instance, all the regular Gaussian probability distributions.

Note that this paper would be unnecessary if $p$ satifies all and only the independencies that can be read from $G$ via the graphical criterion in \citep{BanerjeeandRichardson2003,Kauermann1996}, i.e. $p$ is faithful to $G$. We will see that one cannot safely assume faithfulness in general. Therefore, one is only entitled to assume that $p$ satifies all (but not necessarily only) the independencies that can be read from $G$ via the graphical criterion in \citep{BanerjeeandRichardson2003,Kauermann1996}, i.e. $p$ is Markov wrt $G$. This is actually the reason of being of this paper.

Two previous works that somehow address the problem of reading dependencies off covariance graphs are \citep{Wermuth1995,WermuthandCox1998}. These works propose to determine whether two random variables $U_A$ and $U_B$ are dependent given some other random variables $U_Z$ by, first, constructing the covariance graph of the conditional probability distribution given $U_Z$ of any set of random variables that includes $U_A$ and $U_B$ and, then, checking if the nodes corresponding to $U_A$ and $U_B$ are adjacent in the covariance graph constructed. Therefore, these works construct multiple covariance graphs, one for each conditional probability distribution of interest, from which only the dependencies used in their construction are read. The work presented in this paper is radically different: We only construct the covariance graph of the probability distribution at hand and read from it many more dependencies than those used in its construction. While this is the first work where a sound and complete graphical criterion for reading dependencies off covariance graphs is developed, it is worth mentioning that there already exist sound and complete graphical criteria for reading dependencies off other graphical models. For instance, there exists a sound and complete graphical criterion for reading dependencies off the concentration graph (aka minimal undirected independence map or Markov network) of a probability distribution that satisfies the graphoid properties \citep{Bouckaert1995}, or the graphoid properties and weak transitivity \citep{Pennaetal.2009}. As a matter of fact, the graphical criterion that we present in this paper is dual to the one in \citep{Pennaetal.2009}. There also exists a sound and complete graphical criterion for reading dependencies off the Bayesian network (aka minimal directed independence map) of a probability distribution that satisfies the graphoid properties \citep{Bouckaert1995}, the graphoid properties and weak transitivity \citep{Penna2010}, or the graphoid properties and weak transitivity and composition \citep{Penna2007}. In the last two references, the Bayesian networks are restricted to be polytrees. Note that \citep{Bouckaert1995,Penna2007,Pennaetal.2009,Penna2010} address a related but not more general problem than the one in this paper, since neither concentration graphs nor Bayesian networks include covariance graphs. Related more general problems than the one studied in this paper have been recently addressed, though. For instance, a method to read dependencies from multivariate regression graphs, which include covariance graphs, is proposed in \citep{Wermuth2012}. The author also presents necessary and sufficient conditions for the method to be sound. These conditions are the same as the ones considered in this paper, namely the graphoid properties plus weak transitivity and composition. Unlike in this paper, no proof of completeness of the method proposed appears in \citep{Wermuth2012}. Another related more general work is \citep{Wermuth2011}, where the author shows how summary graphs, which include covariance graphs, can help to detect which dependencies remain undistorted and which do not after marginalization and/or conditioning in a probability distribution generated over a so-called parent graph. It should be pointed out that that the probability distribution is generated over a parent graph implies that it satisfies the same conditions as the ones considered in this paper \citep[Proposition 3]{Wermuth2011}. Again, unlike in this paper, the completeness question is not addressed in \citep{Wermuth2011}. Finally, it should be noted that \citep{Wermuth2011,Wermuth2012} make use of the graphical criterion presented in \citep{SadeghiandLauritzen2011} for reading independencies from loopless mixed graphs, which include multivariate regression graphs, summary graphs and parent graphs. This criterion is sound and complete in certain sense, given that the graphoid properties and composition hold \citep[Theorem 3]{SadeghiandLauritzen2011}. These conditions are, in fact, not only sufficient but necessary too \citep[Section 6.3]{SadeghiandLauritzen2011}.

We think that the work presented in this paper can be of great interest for the artificial intelligence community. Graphs are one of the most commonly used metaphors for representing knowledge because they appeal to human intuition \citep{Pearl1988}. Furthermore, graphs are parsimonious models because they trade off accuracy for simplicity. Consider, for instance, representing the independence model induced by a probability distribution as a graph. Though this graph is typically less accurate than the probability distribution (the graph may not represent all the (in)dependencies and those that are represented are not quantified), it also requires less space to be stored and less time to be communicated than the probability distribution, which may be desirable features in some applications. Thus, it seems sensible developing tools for reasoning with graphs. Our graphical criterion is one such a tool: As the graphical criterion in \citep{BanerjeeandRichardson2003,Kauermann1996} makes the discovery of independencies amenable to human reasoning by enabling to read independencies off a covariance graph $G$ without numerical calculation, so does our graphical criterion with respect to the discovery of dependencies. There are fields where discovering dependencies is more important than discovering independencies \citep{Wermuth1995,WermuthandCox1998}. It is in these fields where we believe that our graphical criterion has greater potential. In bioinformatics, for instance, the nodes of $G$ may represent (the expression levels of) some genes under study. Bioinformaticians are typically more interested in discovering gene dependencies than independencies, because the former provide contexts in which the expression level of some genes is informative about that of some other genes, which may lead to hypothesize dependencies, functional relations, causal relations, the effects of manipulation experiments, etc. See, for instance, \citep{ButteandKohane2000} for an application of covariance graphs to bioinformatics under the name of relevance networks.

The rest of the paper is organized as follows. We start by reviewing some concepts in Section \ref{sec:preliminaries}. We show in Section \ref{sec:wtc} that assuming the graphoid properties, weak transitivity and composition is not too restrictive. We prove in Section \ref{sec:indep} that the existing graphical criterion for reading independencies from covariance graphs is complete in certain sense. This result, in addition to being important in its own, is important for reading as many dependencies as possible from covariance graphs. We introduce in Section \ref{sec:dep} our graphical criterion for reading dependencies from covariance graphs and prove that it is sound and complete in certain sense. Finally, we close with some discussion in Section \ref{sec:discussion}.

\section{Preliminaries}\label{sec:preliminaries}

In this section, we introduce some concepts and results that are used later in this paper. We first recall some results from graphical models. See, for instance, \citep{BanerjeeandRichardson2003,Kauermann1996,Lauritzen1996,Studeny2005} for further information. Let $V=\{1, \ldots, N\}$ be a finite set of size $N$. The elements of $V$ are not distinguished from singletons and the union of the sets $I_1, \ldots, I_n \subseteq V$ is written as the juxtaposition $I_1 \ldots I_n$. We assume throughout the paper that the union of sets precedes the set difference when evaluating an expression. Unless otherwise stated, all the graphs in this paper are defined over $V$. If a graph $G$ contains an undirected (respectively directed) edge between two nodes $v_{1}$ and $v_{2}$, then we say that $v_{1} - v_{2}$ (respectively $v_{1} \rightarrow v_{2}$) is in $G$. If $v_{1} \rightarrow v_{2}$ is in $G$ then $v_{1}$ is called a parent of $v_{2}$ in $G$. Let $Pa_G(I)$ denote the set of parents in $G$ of the nodes in $I \subseteq V$. A route from a node $v_{1}$ to a node $v_{n}$, denoted $v_{1}:v_{n}$, in a graph $G$ is a sequence of nodes $v_{1}, \ldots, v_{n}$ such that there exists an edge in $G$ between $v_{i}$ and $v_{i+1}$ for all $1 \leq i < n$. A path is a route $v_{1}:v_{n}$ in which the nodes $v_{1}, \ldots, v_{n}$ are distinct. A route $v_{1}:v_{n}$ is called undirected if $v_{i} - v_{i+1}$ is in $G$ for all $1 \leq i < n$. A node $v_{1}$ is an ancestor of a node $v_{n}$ in $G$ if there is a route $v_{1}:v_{n}$ in $G$ such that $v_{i} - v_{i+1}$ or $v_{i} \rightarrow v_{i+1}$ is in $G$ for all $1 \leq i < n$.\footnote{Note that our definition of ancestor follows \citep{Lauritzen1996} and differs from others that exist in the literature, e.g. \citep{RichardsonandSpirtes2002}.} Let $An_G(I)$ denote the set of ancestors in $G$ of the nodes in $I \subseteq V$. A node $v_{n}$ is a descendant of a node $v_{1}$ in $G$ if there is a route $v_{1}:v_{n}$ in $G$ such that $v_{i} - v_{i+1}$ or $v_{i} \rightarrow v_{i+1}$ is in $G$ for all $1 \leq i < n$ and $v_{i} \rightarrow v_{i+1}$ is in $G$ for some $1 \leq i < n$. A chain graph (CG) is a graph (possibly) containing both undirected and directed edges and such that no node is a descendant of itself. An undirected graph (UG) is a CG containing only undirected edges. A directed and acyclic graph (DAG) is a CG containing only directed edges. A set of nodes of a CG is connected if there exists an undirected route in the CG between every pair of nodes in the set. A connectivity component of a CG is a connected set that is maximal with respect to set inclusion. The moral graph of a CG $G$, denoted $G^m$, is the UG where two nodes are adjacent iff they are adjacent in $G$ or they are both in $Pa_G(B_i)$ for some connectivity component $B_i$ of $G$. The subgraph of a CG $G$ induced by $I \subseteq V$, denoted $G_I$, is the graph over $I$ where two nodes are connected by a (un)directed edge if that edge is in $G$. Let $X$, $Y$ and $Z$ denote three disjoint subsets of $V$. We say that $X$ is separated from $Y$ given $Z$ in a CG $G$ if every path in $(G_{An_G(XYZ)})^m$ from a node in $X$ to a node in $Y$ has some node in $Z$. We denote such a separation statement by $sep_G(X, Y | Z)$.

Let $U=(U_i)_{i \in V}$ denote a vector of random variables and $U_I$ $(I \subseteq V)$ its subvector $(U_i)_{i \in I}$. We use upper-case letters to denote random variables and the same letters in lower-case to denote their states. Unless otherwise stated, all the probability distributions in this paper are defined over $U$. Let $X$, $Y$, $Z$ and $W$ denote four disjoint subsets of $V$. We represent by $X \ci_p Y | Z$ that $U_X$ is independent of $U_Y$ given $U_Z$ in a probability distribution $p$. We represent by $X \nci_p Y | Z$ that $X \ci_p Y | Z$ does not hold. A probability distribution $p$ is a graphoid if it satisfies the following properties: Symmetry $X \ci_p Y | Z \Rightarrow Y \ci_p X | Z$, decomposition $X \ci_p Y W | Z \Rightarrow X \ci_p Y | Z$, weak union $X \ci_p Y W | Z \Rightarrow X \ci_p Y | Z W$, contraction $X \ci_p Y | Z W \land X \ci_p W | Z \Rightarrow X \ci_p Y W | Z$, and intersection $X \ci_p Y | Z W \land X \ci_p W | Z Y \Rightarrow X \ci_p Y W | Z$. We say that a graphoid $p$ is a WTC graphoid if it satisfies the following two additional properties: Weak transitivity $X \ci_p Y | Z \land X \ci_p Y | Z K \Rightarrow$ \mbox{$ X \ci_p K | Z \lor K \ci_p Y | Z$} with $K \in V \setminus X Y Z$, and composition $X \ci_p Y | Z \land$ \mbox{$ X \ci_p W | Z \Rightarrow X \ci_p Y W | Z$}.

Let $X$, $Y$ and $Z$ denote three disjoint subsets of $V$. We denote by \mbox{$X \ci_G Y | Z$} that a CG $G$ represents that $U_X$ is independent of $U_Y$ given $U_Z$. We denote by $X \nci_G Y | Z$ that $X \ci_G Y | Z$ does not hold. In this paper, we are interested in the classic Lauritzen-Wermuth-Frydenberg interpretation of CGs as independence models, which is based on the following graphical criterion.

\begin{definition}\label{def:inde}
Given a CG $G$, $X \ci_G Y | Z$ if $sep_G(X, Y | Z)$.
\end{definition}

However, in this paper we are also interested in the dual interpretation of UGs as independence models that builds on the following graphical criterion.

\begin{definition}\label{def:inde2}
Given an UG $G$, $X \ci_G Y | Z$ if $sep_G(X, Y | V \setminus X Y Z)$.
\end{definition}

The following rephrasing of the graphical criterion in Definition \ref{def:inde2} may be easier to recall: $X \ci_G Y | Z$ if every path in $G$ from a node in $X$ to a node in $Y$ has some node outside $X Y Z$. When an UG is interpreted according to the graphical criterion in Definition \ref{def:inde} we call it a concentration graph, and when it is interpreted according to the graphical criterion in Definition \ref{def:inde2} we call it a covariance graph. A probability distribution $p$ is Markov wrt a CG, concentration graph or covariance graph $G$ when $X \ci_p Y | Z$ if $X \ci_G Y | Z$ for all $X$, $Y$ and $Z$ disjoint subsets of $V$. A probability distribution $p$ is faithful to a CG, concentration graph or covariance graph $G$ when $X \ci_p Y | Z$ iff $X \ci_G Y | Z$ for all $X$, $Y$ and $Z$ disjoint subsets of $V$. The concentration graph (aka minimal undirected independence map or Markov network) of a probability distribution $p$ is the UG $G$ where two nodes $A$ and $B$ are adjacent iff $A \nci_p B | V \setminus A B$. The covariance graph (aka bi-directed graph) of a probability distribution $p$ is the UG $G$ where two nodes $A$ and $B$ are adjacent iff $A \nci_p B$. A WTC graphoid $p$ is Markov wrt both its covariance graph $G$ and its concentration graph $H$. However, neither $X \nci_G Y | Z$ nor $X \nci_H Y | Z$ implies $X \nci_p Y | Z$, unless $p$ is faithful to $G$ or $H$. This is actually the reason of being of this paper.

\section{WTC Graphoids}\label{sec:wtc}

This paper is devoted to the study of WTC graphoids. We show in this section that WTC graphoids are worth studying because they include important families of probability distributions. For instance, any regular Gaussian probability distribution is a WTC graphoid \citep[Sections 2.2.2, 2.3.5 and 2.3.6]{Studeny2005}. The following theorem introduces another interesting family of WTC graphoids.

\begin{theorem}\label{the:fcg}
Let $G$ be a CG. Any probability distribution $p$ that is faithful to $G$ is a WTC graphoid.
\end{theorem}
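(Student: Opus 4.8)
The plan is to remove the probability distribution from the picture entirely. Because $p$ is faithful to $G$, we have $X \ci_p Y | Z$ iff $X \ci_G Y | Z$ for all disjoint $X,Y,Z$, so $p$ obeys a given WTC graphoid property exactly when the separation relation $\ci_G$ of Definition \ref{def:inde} obeys the same property, read as a statement about triples of disjoint subsets of $V$. Thus it suffices to prove that chain graph separation is a graphoid and, in addition, satisfies composition and weak transitivity. The five graphoid axioms for $\ci_G$ are standard and I would simply invoke them \citep{Studeny2005,Lauritzen1996}; the real work lies in composition and weak transitivity.

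For both of the latter I would argue by contraposition in the moral graphs that define $\ci_G$. The one structural fact I would isolate first is edge monotonicity: if $S \subseteq S'$ are ancestral sets then every edge of $(G_S)^m$ persists in $(G_{S'})^m$, since a $G$-adjacency is inherited and two co-parents of a connectivity component of $G_S$ remain co-parents of the enclosing component of $G_{S'}$. This lets any path witnessing non-separation be transported into a larger ancestral moral graph. The genuine difficulty is the opposite direction: the relevant separation statements use the different ancestral sets $An_G(XYZ)$, $An_G(XKZ)$ and $An_G(XYKZ)$, and enlarging the ancestral set can create new connections through moralization, so a connection present in a larger moral graph need not survive in a smaller one.

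For composition I would assume $X \nci_G YW | Z$ and take a $Z$-avoiding path in $(G_{An_G(XYWZ)})^m$ from some $x \in X$ to an endpoint $t \in YW$; if $t \in Y$ I aim for $X \nci_G Y | Z$ and if $t \in W$ for $X \nci_G W | Z$. The point to establish is that the premises $X \ci_G Y | Z$ and $X \ci_G W | Z$ force the extra ancestors and moral edges carried into $(G_{An_G(XYWZ)})^m$ by the set not containing $t$ to be inessential to the $x$-to-$t$ connection, so that the path survives in $(G_{An_G(XYZ)})^m$ or $(G_{An_G(XWZ)})^m$ respectively. For weak transitivity I would assume $X \nci_G K | Z$ and $K \nci_G Y | Z$ with $K$ a single node, transport both witnessing paths into $M := (G_{An_G(XYKZ)})^m$ by edge monotonicity, and concatenate them into a $Z$-avoiding $x$-$K$-$y$ route in $M$. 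I would then split on $K$: if $K \in An_G(XYZ)$ then $An_G(XYKZ) = An_G(XYZ)$, so $M = (G_{An_G(XYZ)})^m$ and extracting a path from the route yields $X \nci_G Y | Z$; if $K \notin An_G(XYZ)$ then $K$ has no children, and hence no moralization edges incident to it, inside $G_{An_G(XYKZ)}$, so its two neighbours on the route are either co-parents of $K$ (moralised into a bypass edge) or undirected neighbours in $K$'s connectivity component, and in either configuration one should obtain a $ZK$-avoiding $x$-$y$ path in $M$, that is $X \nci_G Y | ZK$.

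The main obstacle I anticipate is precisely this last configuration analysis at $K$: showing rigorously that when $K \notin An_G(XYZ)$ the route can always be rerouted around $K$, which amounts to verifying that a non-ancestral $K$ never acts as an indispensable cut vertex between the ancestral endpoints $x$ and $y$. Carrying this out cleanly, together with the analogous ancestral bookkeeping for composition, is much more transparent once Definition \ref{def:inde} is reformulated as an equivalent route (c-separation) criterion in which colliders appear explicitly, so I would set up that reformulation first and perform the concatenation and case analysis there, where a collider at $K$ naturally yields the conclusion conditioned on $ZK$ and a non-collider the conclusion conditioned on $Z$.
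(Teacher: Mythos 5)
Your opening reduction is exactly the paper's starting point: faithfulness makes the independence model of $p$ coincide with the separation model of $G$, so it suffices that the latter satisfies the seven WTC properties. But from there the paper takes a route that avoids all of your graph combinatorics: it picks a regular Gaussian probability distribution $q$ faithful to $G$ (such $q$ exist by \citep[Theorems 1 and 2]{Penna2011}), notes that $X \ci_p Y | Z$ iff $X \ci_G Y | Z$ iff $X \ci_q Y | Z$, and concludes since every regular Gaussian distribution is a WTC graphoid \citep[Sections 2.2.2, 2.3.5 and 2.3.6]{Studeny2005}. This transfers all seven properties at once, in two lines, and never requires proving that chain graph separation itself satisfies composition or weak transitivity. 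Your plan commits you to proving precisely those two separation-level lemmas directly, and that is where your proposal has genuine holes rather than a complete argument.

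Concretely: your edge monotonicity lemma is correct, as is the observation that $K \in An_G(XYZ)$ implies $An_G(XYKZ)=An_G(XYZ)$, and your claim that a non-ancestral $K$ has no children (hence no incident moral edges) in $G_{An_G(XYKZ)}$ is also right. But the two crucial steps are left open. For composition, the assertion that the premises force the extra ancestors and moral edges contributed by $An_G(W)$ to be ``inessential'' \emph{is} the theorem; a $Z$-avoiding path in $(G_{An_G(XYWZ)})^m$ can pass through moral edges among co-parents of connectivity components that exist only because $W$'s ancestors were included, and no mechanism for excising them is given. For weak transitivity, your local bypass at $K$ fails in the mixed junction $u \rightarrow K - w$: here $u$ and $w$ are not co-parents of $K$'s connectivity component, so no moral edge $u - w$ exists, and $K$ can in principle be a cut vertex of the concatenated route. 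You flag this yourself and defer to a route-based c-separation reformulation, but that does not dissolve the difficulty: in c-separation for LWF chain graphs the collider/non-collider status attaches to whole \emph{sections} (maximal undirected subroutes), not to the single node $K$, so ``a collider at $K$'' is determined by edges possibly far from $K$, and the dichotomy you describe is not a local case split. You would additionally need to handle interior occurrences of $K$ on the transported paths, since the conclusion $X \nci_G Y | ZK$ requires a path avoiding $ZK$ everywhere, not merely at the junction. The target statements are true (they follow from the paper's Gaussian transfer), so your strategy is viable in principle, but as written the proof of its two load-bearing lemmas is missing; if you want a self-contained argument, the faithful-Gaussian transfer is the economical way to close it.
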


\begin{proof}
Let $q$ be any regular Gaussian probability distribution that is faithful to $G$. Such probability distributions exist due to \citep[Theorems 1 and 2]{Penna2011}. Since $p$ and $q$ are faithful to $G$, $X \ci_p Y | Z$ iff $X \ci_G Y | Z$ iff $X \ci_q Y | Z$ for all $X$, $Y$ and $Z$ disjoint subsets of $V$. Therefore, $p$ is a WTC graphoid because $q$ is a WTC graphoid.
\end{proof}

The previous theorem is meaningful only if we prove that, for any CG, there exist probability distributions that are faithful to it. We do so in the following theorem.

\begin{theorem}\label{the:fcg2}
Let $G$ be a CG. If each random variable in $U$ has a finite prescribed sample space with at least two possible states, then there exists a discrete probability distribution with the prescribed sample spaces for the random variables in $U$ that is faithful to $G$. On the other hand, if the sample space of each random variable in $U$ is $\mathbb{R}$, then there exist a regular Gaussian probability distribution that is faithful to $G$ and a continuous but non-Gaussian probability distribution that is faithful to $G$.
\end{theorem}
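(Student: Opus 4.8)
The plan is to prove the three assertions in turn, bootstrapping the last two from the first. The regular Gaussian assertion is the workhorse, and I would take it essentially from the existence result already invoked in the proof of Theorem~\ref{the:fcg}, namely \citep[Theorems 1 and 2]{Penna2011}, which guarantees a regular Gaussian probability distribution faithful to $G$ whenever each variable has sample space $\mathbb{R}$. If one wants a self-contained argument, it is the algebraic-variety argument sketched below, specialized to positive-definite covariance matrices that are Markov wrt $G$.

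For the continuous non-Gaussian assertion I would not build a distribution from scratch but transform the Gaussian one. Let $p$ be a regular Gaussian distribution faithful to $G$, and let $f_i:\mathbb{R}\to\mathbb{R}$, $i \in V$, be strictly increasing smooth bijections, at least one of which is nonlinear (e.g. $f_i(t)=t^{3}$). Consider the pushforward $p'$ of $p$ under the coordinatewise map $(u_i)_{i\in V}\mapsto(f_i(u_i))_{i\in V}$. Since each $f_i$ is a bijection, the $\sigma$-algebras generated by $U_I$ and by $(f_i(U_i))_{i\in I}$ coincide for every $I\subseteq V$, so $X\ci_{p'}Y|Z$ iff $X\ci_p Y|Z$ for all disjoint $X,Y,Z$; that is, $p'$ has exactly the same independence model as $p$ and is therefore faithful to $G$. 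The map is a diffeomorphism, so $p'$ is continuous, and the nonlinearity of some $f_i$ makes $p'$ non-Gaussian. This disposes of the third assertion with essentially no computation.

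The discrete assertion is the one requiring real work, and here I would use a measure-zero argument. First parametrize the discrete distributions Markov wrt $G$ on the prescribed finite sample spaces: the chain-graph factorization expresses each joint probability as a product/ratio of component factors, yielding a rational map $\theta\mapsto p_\theta$ from a full-dimensional connected parameter region $\Theta\subseteq\mathbb{R}^{d}$ onto a family of distributions that are Markov wrt $G$ by construction, where the assumption of at least two states per variable is what makes this family rich enough. For every triple $(X,Y,Z)$ of disjoint subsets with $X\nci_G Y|Z$, the constraint that $X\ci_{p_\theta}Y|Z$ is a finite system of polynomial equations in the entries of $p_\theta$ (the usual cross-product differences $p(x,y|z)-p(x|z)p(y|z)$), hence algebraic in $\theta$, so its solution set $\Theta_{X,Y,Z}$ is an algebraic subset of $\Theta$. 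The crux is that each $\Theta_{X,Y,Z}$ is a \emph{proper} subset of $\Theta$, equivalently that at least one discrete distribution Markov wrt $G$ satisfies $X\nci_p Y|Z$; this is the completeness (perfect-map realizability) of the chain-graph separation criterion, which I would cite from \citep{Penna2011}. A proper algebraic subset of a connected full-dimensional set has Lebesgue measure zero, and there are only finitely many triples, so $\bigcup_{(X,Y,Z)}\Theta_{X,Y,Z}$ is a null set; any $\theta$ in its nonempty complement gives a $p_\theta$ that is Markov wrt $G$ and satisfies $X\nci_{p_\theta}Y|Z$ whenever $X\nci_G Y|Z$, i.e. is faithful to $G$.

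I expect the main obstacle to be precisely the properness of the varieties $\Theta_{X,Y,Z}$: establishing that each non-separation in $G$ is witnessed by at least one Markov distribution, and doing so with a parametrization for which \emph{algebraic and proper} genuinely yields \emph{measure zero} (so $\Theta$ must be set up connected and the factorization map genuinely rational/polynomial). Everything else — the finiteness of the union, the coordinatewise-transformation argument, and the Gaussian base case — is routine once this completeness is in hand.
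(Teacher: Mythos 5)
Your Gaussian and continuous non-Gaussian parts are sound. The Gaussian assertion rests on exactly the citation the paper uses, \citep[Theorems 1 and 2]{Penna2011}. For the non-Gaussian assertion the paper uses a Gaussian copula combined with arbitrary continuous non-Gaussian univariate marginals; your pushforward of the faithful Gaussian $p$ under coordinatewise strictly increasing bijections $f_i$, one of them nonlinear, is a correct and essentially equivalent alternative: both arguments amount to changing the univariate marginals while leaving the independence model untouched (bijections preserve the generated $\sigma$-algebras, so $X \ci_{p'} Y | Z$ iff $X \ci_p Y | Z$, and since every marginal of a multivariate Gaussian is Gaussian, a non-Gaussian marginal forces $p'$ to be non-Gaussian). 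Your version is arguably more elementary and self-contained than the copula route, which buys independence from \citep{Nelsen2006}.

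The discrete part, however, has a genuine hole, and it sits precisely where you yourself locate the crux. The properness of each variety $\Theta_{X,Y,Z}$ requires, for every triple with $X \nci_G Y | Z$, a \emph{discrete} distribution on the \emph{prescribed finite sample spaces} that is Markov wrt $G$ and satisfies $X \nci_p Y | Z$; you propose to cite this from \citep{Penna2011}, but that reference is the \emph{Gaussian} case, where every variable has sample space $\mathbb{R}$. A Gaussian witness cannot be converted into the needed discrete one: there is no generic discretization that preserves both the Markov property wrt $G$ and the targeted conditional dependence on arbitrary prescribed finite state spaces, so the only non-routine step of your argument is left unsupported. The correct source is \citep[Theorems 3 and 5]{Penna2009}, the discrete-case analogue, which is exactly what the paper cites for the entire first statement. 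Once that citation is in place your measure-zero scaffolding becomes redundant rather than wrong: as the paper notes, \citep{Penna2009} already proves the stronger result that almost all discrete distributions Markov wrt a CG are faithful to it, and your sketch essentially reconstructs the proof strategy of that reference instead of something derivable from the Gaussian case.
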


\begin{proof}
The first and second statements in the theorem are proven in \citep[Theorems 3 and 5]{Penna2009} and \citep[Theorems 1 and 2]{Penna2011}, respectively. The third statement in the theorem can easily be proven by using copulas \citep{Nelsen2006} as follows. Let $p$ denote any regular Gaussian probability distribution that is faithful to $G$. Derive the Gaussian copula for $p$. The copula represents the independence model of $p$ stripped from its univariate marginals. Therefore, the copula together with a set of arbitrary univariate marginals can be used to generate a multivariate probability distribution whose independence model is the one dictated by the copula and whose univariate marginals are the given ones. The desired result is achieved if the arbitrary marginals are chosen so that they are continuous but non-Gaussian. See \citep{Nelsen2006} for more details.
\end{proof}

It is worth mentioning that the results in \citep[Theorems 3 and 5]{Penna2009} and \citep[Theorems 1 and 2]{Penna2011} are actually stronger than the first and second statements in the previous theorem. Specifically, the results reported there are that, in certain measure-theoretic sense, almost all the discrete probability distributions and regular Gaussian probability distributions that are Markov wrt a CG are faithful to it. Finally, note that the marginals and conditionals of a regular Gaussian probability distribution are regular Gaussian probability distributions and, thus, WTC graphoids. In fact, this property can be generalized to all the WTC graphoids. The following theorem, originally reported in \citep[Theorem 5]{Pennaetal.2006}, formalizes this result.

\begin{theorem}\label{the:ChickeringandMeek2002}
Let $p$ be a WTC graphoid and let $I \subseteq V$. Then, $p(U_{V \setminus I})$ is a WTC graphoid. If $p(U_{V \setminus I} | U_I = u_I)$ has the same (in)dependencies for all $u_I$, then $p(U_{V \setminus I} | U_I = u_I)$ for any $u_I$ is a WTC graphoid.
\end{theorem}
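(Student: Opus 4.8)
The plan is to reduce every property required of the marginal $p(U_{V \setminus I})$ and of the conditional $p(U_{V \setminus I} | U_I = u_I)$ to the corresponding property already known for $p$, after first establishing a dictionary relating the (in)dependencies of each reduced distribution to those of $p$. For the marginal I would record the elementary fact that, for pairwise disjoint $X, Y, Z \subseteq V \setminus I$, the statement $X \ci_{p(U_{V \setminus I})} Y | Z$ holds iff $X \ci_p Y | Z$, since the joint law of $U_{XYZ}$ under $p(U_{V \setminus I})$ is just the marginal of $p$ on $U_{XYZ}$. Writing $p_c$ for the conditional $p(U_{V \setminus I} | U_I = u_I)$, the analogous dictionary is that, for pairwise disjoint $X, Y, Z \subseteq V \setminus I$, the statement $X \ci_{p_c} Y | Z$ corresponds to $X \ci_p Y | ZI$. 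It is here that the hypothesis enters: in general $X \ci_p Y | ZI$ forces $X \ci_{p_c} Y | Z$ only for $p$-almost every $u_I$, and the assumption that $p_c$ has the same (in)dependencies for all $u_I$ is exactly what upgrades this into an equivalence valid for the arbitrary fixed $u_I$ of the statement.

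With the dictionaries in hand I would check the graphoid properties, weak transitivity and composition one at a time. For the marginal the translation is transparent: each property, stated over sets inside $V \setminus I$, becomes the identical property of $p$ over the same sets, which holds because $p$ is a WTC graphoid. For the conditional, each property translates to the same property of $p$ with $I$ appended to every conditioning set. For example contraction, $X \ci_{p_c} Y | ZW \land X \ci_{p_c} W | Z \Rightarrow X \ci_{p_c} YW | Z$, becomes $X \ci_p Y | ZWI \land X \ci_p W | ZI \Rightarrow X \ci_p YW | ZI$, which is contraction of $p$; symmetry, decomposition, weak union, intersection and composition go through in the same mechanical way.

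The one step needing genuine care is weak transitivity for the conditional. There it reads $X \ci_{p_c} Y | Z \land X \ci_{p_c} Y | ZK \Rightarrow X \ci_{p_c} K | Z \lor K \ci_{p_c} Y | Z$ for $K \in (V \setminus I) \setminus XYZ$, and after translation it becomes $X \ci_p Y | ZI \land X \ci_p Y | ZKI \Rightarrow X \ci_p K | ZI \lor K \ci_p Y | ZI$. To apply weak transitivity of $p$ with conditioning set $ZI$ I must verify that the witness $K$ lies outside the augmented conditioning set, i.e. $K \in V \setminus XYZI$; this holds because $K \in V \setminus I$ gives $K \notin I$ while $K \notin XYZ$ is given, so $K \notin XYZI$. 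Thus the legality of the transitivity witness survives the translation, and the implication follows from weak transitivity of $p$.

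The main obstacle I anticipate is not this axiom-chasing, which is routine, but the rigorous justification of the conditional dictionary, namely the equivalence of $X \ci_p Y | ZI$ with $X \ci_{p_c} Y | Z$ under the stated hypothesis. Establishing it cleanly requires the measure-theoretic observation that conditioning on $U_I = u_I$ converts $X \ci_p Y | ZI$ into $X \ci Y | Z$ in $p_c$ for $p$-almost every $u_I$, combined with the assumption that the (in)dependence pattern of $p_c$ does not vary with $u_I$, which discharges the ``almost every'' qualifier and makes the equivalence hold for the particular $u_I$ named in the statement.
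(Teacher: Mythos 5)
Your proposal is correct, and it is worth noting that the paper itself states this theorem without proof, citing \citep[Theorem 5]{Pennaetal.2006}; the argument in that reference is essentially yours, namely the dictionary $X \ci_{p(U_{V \setminus I})} Y | Z \Leftrightarrow X \ci_p Y | Z$ for the marginal and, under the stated hypothesis, $X \ci_{p(U_{V \setminus I} | U_I = u_I)} Y | Z \Leftrightarrow X \ci_p Y | Z I$ for the conditional, after which every graphoid property plus weak transitivity and composition translates verbatim with $I$ appended to each conditioning set. Your two points of care---checking that the weak-transitivity witness $K \in (V \setminus I) \setminus XYZ$ remains a legal witness in $V \setminus XYZI$ after translation, and using the same-(in)dependencies-for-all-$u_I$ hypothesis to discharge the almost-every-$u_I$ qualifier in both directions of the conditional dictionary---are exactly the steps that make the reduction sound.
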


It is worth noting that many members of the families of WTC graphoids that we have presented in this section are Markov wrt their covariance graphs but not faithful to them. Hence, the need to develop a graphical criterion for reading dependencies from the covariance graph of a WTC graphoid. For example, consider any discrete, regular Gaussian, or continuous but non-Gaussian probability distribution $p$ that is faithful to a CG with $\{A \rightarrow B, B \rightarrow C\}$ as induced subgraph. Then, the covariance graph $G$ of $p$ has $\{A - B, A - C, B - C\}$ as induced subgraph and, thus, $p$ is not faithful to $G$ since $A \nci_G C | Z$ but $A \ci_p C | Z$ for some $Z \subseteq V$. This example is based on \citep{DrtonandRichardson2003,PearlandWermuth1994}. The interested reader is referred to these works for a characterization of the independence models that can be represented exactly by DAGs but not by covariance graphs.

\section{Reading Independencies}\label{sec:indep}

The graphical criterion in Definition \ref{def:inde2} is sound for reading independencies from the covariance graph $G$ of a WTC graphoid $p$, that is, it only identifies independencies in $p$ \citep[Proposition 2.2]{BanerjeeandRichardson2003,Kauermann1996}. In this section, we show that this graphical criterion is complete in the sense that it identifies all the independencies in $p$ that can be identified by studying $G$ alone. This completeness result, in addition to being important in its own, is crucial for reading as many dependencies as possible from $G$, as we will see in the next section. In order to prove the referred completeness result, it suffices to prove that there exist WTC graphoids that are faithful to $G$, because $G$ is their covariance graph and they only have the independencies that the graphical criterion in Definition \ref{def:inde2} identifies from $G$. Therefore, we cannot derive more independencies from $G$ alone than those identified by this graphical criterion, because $p$ may be one of the WTC graphoids that are faithful to $G$. The following two theorems prove the desired result.

\begin{theorem}\label{the:fcvg}
Let $G$ be a covariance graph. If each random variable in $U$ has a finite prescribed sample space with at least two possible states, then there exists a discrete probability distribution with the prescribed sample spaces for the random variables in $U$ that is faithful to $G$. On the other hand, if the sample space of each random variable in $U$ is $\mathbb{R}$, then there exist a regular Gaussian probability distribution that is faithful to $G$ and a continuous but non-Gaussian probability distribution that is faithful to $G$.
\end{theorem}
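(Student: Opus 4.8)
The plan is to reduce the statement to Theorem \ref{the:fcg2} by realizing the covariance graph $G$ as the marginal, over $V$, of a DAG whose latent nodes encode the edges of $G$. Concretely, I would introduce a fresh latent node $h_{ij}$ for every edge $i - j$ of $G$, let $H$ be the collection of these latent nodes, and build the DAG $D$ over $V \cup H$ with the arrows $h_{ij} \rightarrow i$ and $h_{ij} \rightarrow j$ and no other edges, so that the nodes of $V$ are the sinks of $D$ and the nodes of $H$ are its sources. Since $D$ is a CG (indeed a DAG), Theorem \ref{the:fcg2} applies to it and yields, according to the prescribed sample spaces, a discrete, a regular Gaussian, and a continuous but non-Gaussian probability distribution $q$ over $U_{V \cup H}$ that is faithful to $D$; in the discrete case I would assign the prescribed sample spaces to the variables in $V$ and arbitrary binary sample spaces to the variables in $H$, and in the continuous cases the sample space of every variable is $\mathbb{R}$.

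Second, I would take $p = q(U_V)$, the marginal over $V$, as the candidate distribution. The crucial and entirely elementary observation is that marginalizing out $H$ does not alter any (in)dependence among the variables of $V$: for disjoint $X, Y, Z \subseteq V$ one has $X \ci_p Y | Z$ iff $X \ci_q Y | Z$, simply because both statements refer to the same joint law of the $V$-variables. Combined with the faithfulness of $q$, this gives $X \ci_p Y | Z$ iff $sep_D(X, Y | Z)$ in the sense of Definition \ref{def:inde}. Moreover, $p$ inherits the required analytic form: a marginal of a discrete distribution with the prescribed sample spaces is again such a distribution, a marginal of a regular Gaussian distribution is again regular Gaussian (a principal submatrix of a positive definite matrix is positive definite), and for the non-Gaussian case I would apply to the regular Gaussian $p$ obtained here the same copula construction used in the proof of Theorem \ref{the:fcg2}, which preserves the independence model while allowing non-Gaussian continuous univariate marginals.

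Third, it remains to match the two separation notions, and this is where the real work lies. I would prove the graph-theoretic lemma that, for disjoint $X, Y, Z \subseteq V$, the statement $sep_D(X, Y | Z)$ holds iff every path of $G$ between a node of $X$ and a node of $Y$ contains a node outside $XYZ$, i.e. iff $X \ci_G Y | Z$ in the covariance-graph sense of Definition \ref{def:inde2}. The idea is that, because every $V$-node is a sink and every $H$-node a source of $D$, any path of $D$ between two nodes of $V$ has the alternating form $i \leftarrow h \rightarrow v_1 \leftarrow h' \rightarrow v_2 \leftarrow \cdots \rightarrow j$, so that its colliders are exactly its intermediate $V$-nodes and its non-colliders are its $H$-nodes; since conditioning is only on subsets of $V$ and the $V$-nodes have no proper descendants, such a path is active iff all of its intermediate $V$-nodes lie in $Z$, which is precisely the condition that the corresponding path $i - v_1 - v_2 - \cdots - j$ of $G$ witnesses a failure of the Definition \ref{def:inde2} criterion. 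Threading this correspondence through the moralization of the ancestral subgraph that defines $sep_D$ is the only delicate point; once it is established, chaining the equivalences of the previous paragraph gives $X \ci_p Y | Z$ iff $X \ci_G Y | Z$ for all disjoint $X, Y, Z \subseteq V$, so that $p$ is faithful to $G$, completing the proof.
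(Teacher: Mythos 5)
Your proposal is correct and follows essentially the same route as the paper's own proof: the identical latent-parent DAG construction (one new source node per edge of $G$, with arrows into its two endpoints), the identical marginalization argument over the $V$-variables, and the identical copula step for the continuous non-Gaussian case. The only cosmetic differences are that you invoke Theorem \ref{the:fcg2} where the paper cites \citep{Meek1995} and \citep{Spirtesetal.1993} directly (legitimate, since a DAG is a CG in the paper's sense), and that you actually sketch the separation equivalence $sep_D(X, Y | Z)$ iff $X \ci_G Y | Z$ --- including the collider/source path analysis and the moralization subtlety --- which the paper dismisses as ``easy to see.''
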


\begin{proof}
We start by proving the first statement in the theorem. First, we create a DAG from $G$ as follows: Replace each edge $A - B$ in $G$ with $A \leftarrow V_{A,B}' \rightarrow B$ where $V_{A,B}'$ is a newly created node. Call the resulting DAG $H$ and let $V'$ denote all the newly created nodes. It is easy to see that $X \ci_H Y | Z$ iff $X \ci_G Y | Z$ for all $X$, $Y$ and $Z$ disjoint subsets of $V$.

Let $U'=(U_i')_{i \in V'}$ denote a vector of random variables such that each of them has any finite sample space with at least two possible states. Let $p(U, U')$ denote any discrete probability distribution that is faithful to $H$. Such probability distributions exist by \citep[Theorem 7]{Meek1995}. Note that, for any $X$, $Y$ and $Z$ disjoint subsets of $V$, $X \ci_{p(U)} Y | Z$ iff $X \ci_{p(U, U')} Y | Z$ iff $X \ci_H Y | Z$ iff $X \ci_G Y | Z$. Consequently, $p(U)$ is faithful to $G$.

The second statement in the theorem can be proven in much the same way as the first if $p(U, U')$ denotes now any regular Gaussian probability distribution that is faithful to $H$. Such probability distributions exist by \citep[Theorem 3.2]{Spirtesetal.1993}. Note that $p(U)$ is regular Gaussian.

Finally, the third statement in the theorem can be proven by using copulas as we did in the proof of Theorem \ref{the:fcg2}.
\end{proof}

An alternative proof of the second statement in the theorem above follows from \citep[Theorem 7.5]{RichardsonandSpirtes2002}. Specifically, \citep{RichardsonandSpirtes2002} introduces a new class of graphical models called ancestral graphs, whose edges can be undirected, directed or bi-directed ($\leftrightarrow$). Covariance graphs are equivalent to ancestral graphs with only bi-directed edges. \citep[Theorem 7.5]{RichardsonandSpirtes2002} proves that, for any ancestral graph, there is a regular Gaussian probability distribution that is faithful to it. In Appendix A, we strengthen the second statement in the theorem above by proving that, in certain measure-theoretic sense, almost all the regular Gaussian probability distributions that are Markov wrt a covariance graph are faithful to it. Although this result is not used in this paper, we consider it to be interesting in its own and, thus, we decide to report on it.

\begin{theorem}\label{the:fcvg2}
Let $G$ be a covariance graph. Any probability distribution $p$ that is faithful to $G$ is a WTC graphoid.
\end{theorem}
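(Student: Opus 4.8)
The plan is to mirror exactly the argument used for Theorem \ref{the:fcg}, replacing the existence result for CGs with the covariance-graph-specific existence result just established in Theorem \ref{the:fcvg}. The key conceptual point, which I would state up front, is that being a WTC graphoid is a property of the independence model alone: each of the graphoid properties, weak transitivity and composition is a closure rule of the form ``if certain independence statements hold, then a certain further independence statement holds''. Consequently, whether a probability distribution is a WTC graphoid depends only on which triples $X \ci Y | Z$ hold in it, so any two distributions with the same independence model are WTC graphoids simultaneously.

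With this observation in hand the argument is short. First I would invoke Theorem \ref{the:fcvg} (in the case where the sample space of each random variable is $\mathbb{R}$) to obtain a regular Gaussian probability distribution $q$ that is faithful to $G$. Next I would recall that every regular Gaussian probability distribution is a WTC graphoid, as noted at the start of Section \ref{sec:wtc}. Since both $p$ and $q$ are faithful to $G$, they share the same independence model: for all disjoint $X$, $Y$ and $Z$ we have $X \ci_p Y | Z$ iff $X \ci_G Y | Z$ iff $X \ci_q Y | Z$. By the observation above, $p$ is therefore a WTC graphoid because $q$ is.

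I do not expect any genuine obstacle, as the proof is structurally identical to that of Theorem \ref{the:fcg}. The only point requiring a moment's care is that one must use the covariance-graph existence result of Theorem \ref{the:fcvg} rather than simply quoting Theorem \ref{the:fcg}: a covariance graph is read through the dual separation criterion of Definition \ref{def:inde2}, so its independence model differs in general from that of the same undirected graph interpreted as a concentration graph, and faithfulness to $G$ as a covariance graph is not the same relation. Once the correct existence theorem supplies a WTC-graphoid witness $q$ with the independence model of $G$, the transfer of the WTC property to $p$ through their common independence model finishes the proof.
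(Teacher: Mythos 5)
Your proposal is correct and follows exactly the paper's own proof: invoke Theorem \ref{the:fcvg} to obtain a regular Gaussian distribution $q$ faithful to $G$, note that $p$ and $q$ then share the same independence model, and transfer the WTC graphoid property from $q$ to $p$. Your explicit remark that being a WTC graphoid depends only on the independence model is exactly the (implicit) justification for the paper's final step, so there is nothing to add.
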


\begin{proof}
Let $q$ be any regular Gaussian probability distribution that is faithful to $G$. Such probability distributions exist due to Theorem \ref{the:fcvg}. Since $p$ and $q$ are faithful to $G$, $X \ci_p Y | Z$ iff $X \ci_G Y | Z$ iff $X \ci_q Y | Z$ for all $X$, $Y$ and $Z$ disjoint subsets of $V$. Therefore, $p$ is a WTC graphoid because $q$ is a WTC graphoid.
\end{proof}

As explained at the beginning of this section, the previous two theorems imply that the graphical criterion in Definition \ref{def:inde2} is complete for reading independencies from the covariance graph $G$ of a WTC graphoid $p$, in the sense that it identifies all the independencies in $p$ that can be identified by studying $G$ alone. An equivalent formulation of this result is that the graphical criterion is complete in the sense that it identifies all the independencies that are shared by all the WTC graphoids whose covariance graph is $G$. Finally, it is worth mentioning that the graphical criterion in Definition \ref{def:inde2} is not complete in the more stringent sense of being able to identify all the independencies in $p$. Actually, no sound graphical criterion for reading independencies from $G$ is complete in this latter sense. An example illustrating this follows.

\begin{example}\label{exa:example}
Let $p$ and $p'$ denote two WTC graphoids that are faithful to the CGs in the left and center of Table \ref{tab:example}, respectively. Such WTC graphoids exist by \citep[Theorems 1 and 2]{Penna2011}. Note that $A \ci_p C | B$ whereas \mbox{$A \nci_{p'} C | B$}. Let $G$ and $H$ denote the covariance and concentration graphs of $p$, respectively. Likewise, let $G'$ and $H'$ denote the covariance and concentration graphs of $p'$, respectively. Note that $G$, $H$, $G'$ and $H'$ are all the complete graph over $\{A, B, C, D\}$. Now, let us assume that we are dealing with $p$. Then, no sound graphical criterion entails $A \ci_p C | B$ from $G$ because this independence does not hold in $p'$, and it is impossible to know whether we are dealing with $p$ or $p'$ on the sole basis of $G$.
\end{example}

\begin{table}[t]
\centering
\begin{tabular}{c}
\hline
\\
\includegraphics[scale=0.45]{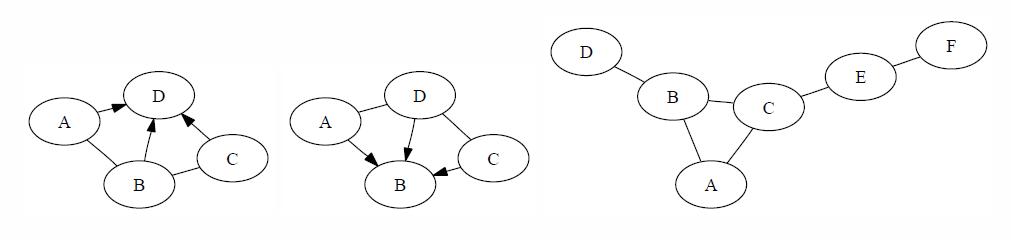}
\\
\hline
\end{tabular}
\caption{CGs in Example \ref{exa:example}, and covariance graph in Examples \ref{exa:example2} and \ref{exa:example3}.}\label{tab:example}
\end{table}

\section{Reading Dependencies}\label{sec:dep}

In this section, we present the main contribution of this paper: We introduce a graphical criterion for reading dependencies from the covariance graph of a WTC graphoid and prove that it is sound and complete in certain sense. If $G$ is the covariance graph of a WTC graphoid $p$ then we know, by definition of $G$, that $A \nci_p B$ for all the edges $A - B$ in $G$. We call these dependencies the dependence base of $p$. Further dependencies in $p$ can be derived from the dependence base via the WTC graphoid properties. For this purpose, we rephrase the WTC graphoid properties in their contrapositive form as follows. Symmetry $Y \nci_p X | Z \Rightarrow X \nci_p Y | Z$. Decomposition $X \nci_p Y | Z \Rightarrow X \nci_p Y W | Z$. Weak union $X \nci_p Y | Z W \Rightarrow X \nci_p Y W | Z$. Contraction $X \nci_p Y W | Z \Rightarrow$ \mbox{$ X \nci_p Y | Z W \lor X \nci_p W | Z$} is problematic for deriving new dependencies because it contains a disjunction in the consequent and, thus, we split it into two properties: Contraction1 $X \nci_p Y W | Z \land X \ci_p Y | Z W \Rightarrow X \nci_p W | Z$, and contraction2 $X \nci_p Y W | Z \land X \ci_p W | Z \Rightarrow X \nci_p Y | Z W$. Likewise, intersection gives rise to intersection1 $X \nci_p Y W | Z \land X \ci_p Y | Z W \Rightarrow X \nci_p W | Z Y$, and intersection2 $X \nci_p Y W | Z \land X \ci_p W | Z Y \Rightarrow X \nci_p Y | Z W$. Note that intersection1 and intersection2 are equivalent and, thus, we refer to them simply as intersection. Similarly, weak transitivity gives rise to weak transitivity1 $X \nci_p K | Z \land K \nci_p Y | Z \land X \ci_p Y | Z \Rightarrow X \nci_p Y | Z K$, and weak transitivity2 $X \nci_p K | Z \land K \nci_p Y | Z \land X \ci_p Y | Z K \Rightarrow X \nci_p Y | Z$. Finally, composition $X \nci_p Y W | Z \Rightarrow X \nci_p Y | Z \lor X \nci_p W | Z$ gives rise to composition1 $X \nci_p Y W | Z \land X \ci_p Y | Z \Rightarrow X \nci_p W | Z$, and composition2 $X \nci_p Y W | Z \land X \ci_p W | Z \Rightarrow X \nci_p Y | Z$. Since composition1 and composition2 are equivalent, we refer to them simply as composition. The independence in the antecedent of any of the properties above holds if it can be read off $G$ via the graphical criterion in Definition \ref{def:inde2}. This is the best solution we can hope for because, as discussed in the previous section, this graphical criterion is sound and complete for WTC graphoids. Moreover, this solution does not require more information than what it is available, namely $G$ or equivalently the dependence base of $p$. We define the WTC graphoid closure of the dependence base of $p$ as the set of dependencies that are in the dependence base of $p$ plus those that can be derived from it by applying the nine properties above.

Let $X$, $Y$ and $Z$ denote three disjoint subsets of $V$. We say that $X$ is connected to $Y$ given $Z$ in an UG $G$ if there exist two nodes $A \in X$ and $B \in Y$ such that there exists a \textbf{single} path between $A$ and $B$ in $G$ whose nodes are all outside $X Y Z \setminus A B$. We denote such a connection statement by $con_G(X, Y | Z)$. We denote by $X \de_G Y | Z$ that an UG $G$ represents that $U_X$ is dependent on $U_Y$ given $U_Z$. We can now introduce our graphical criterion for reading dependencies from the covariance graph of a WTC graphoid.

\begin{definition}\label{def:de}
Given the covariance graph $G$ of a WTC graphoid, $X \de_G Y | Z$ if $con_G(X, Y | V \setminus X Y Z)$.
\end{definition}

The following rephrasing of the graphical criterion in the previous definition may be easier to recall: $X \de_G Y | Z$ if there exist two nodes $A \in X$ and $B \in Y$ such that there exists a \textbf{single} path between $A$ and $B$ in $G$ whose nodes are all in $A B Z$. Interestingly, the graphical criterion in the previous definition is dual to the following graphical criterion, which we developed in \citep{Pennaetal.2009} for reading dependencies from the concentration graph of a WTC graphoid.

\begin{definition}\label{def:de2}
Given the concentration graph $G$ of a WTC graphoid, $X \de_G Y | Z$ if $con_G(X, Y | Z)$.
\end{definition}

We proved in \citep[Theorems 5 and 6, Example 3]{Pennaetal.2009} that the graphical criterion in Definition \ref{def:de2} is sound and complete in certain sense. We prove in the following two theorems that the graphical criterion in Definition \ref{def:de} is also sound and complete in certain sense.

\begin{theorem}\label{the:sound}
Let $G$ be the covariance graph of a WTC graphoid $p$. If $X \de_G Y | Z$, then $X \nci_p Y | Z$ is in the WTC graphoid closure of the dependence base of $p$.
\end{theorem}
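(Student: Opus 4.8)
The plan is to reduce the statement to its essential core and then argue by strong induction on the length of the witnessing path. First I would observe that it suffices to put $A \nci_p B \mid Z$ into the WTC graphoid closure, where $A \in X$ and $B \in Y$ are the two endpoints of the single path supplied by $con_G(X, Y \mid V \setminus XYZ)$: from $A \nci_p B \mid Z$, decomposition in its dependence form ($A \nci_p B \mid Z \Rightarrow A \nci_p BW \mid Z$) together with symmetry yields $X \nci_p Y \mid Z$ after adjoining $Y \setminus B$ on the $B$-side and $X \setminus A$ on the $A$-side. By the rephrasing following Definition \ref{def:de}, the hypothesis provides a unique path $P = A - v_2 - \cdots - v_{n-1} - B$ in $G$ whose nodes all lie in $ABZ$; since $A$, $B$ and $Z$ are disjoint, the interior nodes lie in $Z$. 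I would then prove by strong induction on $n$, the number of nodes of $P$, that any such unique path forces $A \nci_p B \mid Z$ into the closure.

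For the inductive step ($n \ge 3$) I would split $P$ at the neighbour $v_{n-1}$ of $B$ and apply weak transitivity1, i.e.\ $A \nci_p v_{n-1} \mid Z \setminus v_{n-1} \land v_{n-1} \nci_p B \mid Z \setminus v_{n-1} \land A \ci_p B \mid Z \setminus v_{n-1} \Rightarrow A \nci_p B \mid Z$. The side independence $A \ci_p B \mid Z \setminus v_{n-1}$ is read from $G$ via Definition \ref{def:inde2}: deleting the interior node $v_{n-1}$ destroys the only $A$--$B$ path inside $ABZ$, so none remains inside $AB(Z \setminus v_{n-1})$, and this reading is sound. The first dependence is handed to the induction hypothesis, since the initial segment $A - v_2 - \cdots - v_{n-1}$ is the unique path from $A$ to $v_{n-1}$ with nodes in $Av_{n-1}(Z \setminus v_{n-1})$: appending the single edge $v_{n-1} - B$ to any competitor $Q$ (which cannot contain $B$, as $B \notin Av_{n-1}(Z \setminus v_{n-1})$) would produce a simple $A$--$B$ path in $ABZ$ distinct from $P$. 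The second dependence $v_{n-1} \nci_p B \mid Z \setminus v_{n-1}$ is an instance of the base case. Splitting specifically at the $B$-neighbour is what keeps the recursion clean: the discarded piece is a single edge into $B$, so concatenation never fails to be simple and uniqueness descends to the subproblems.

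For the base case ($n = 2$, so $P$ is the edge $A - B$), uniqueness means that removing this edge disconnects $A$ from $B$ inside the subgraph induced on $ABZ$; hence the edge is a bridge, and every $k \in Z$ is, once the bridge is removed, disconnected from at least one of $A$ and $B$. Starting from $A \nci_p B$ in the dependence base, I would adjoin the nodes of $Z$ to the conditioning set one at a time by decomposition followed by contraction2 (with a symmetry step when working from $B$): to move from $A \nci_p B \mid W$ to $A \nci_p B \mid Wk$ I would use $A \ci_p k \mid W$ when $k$ is disconnected from $A$, and $B \ci_p k \mid W$ when $k$ is disconnected from $B$. The required independence is always readable from $G$, since every path between $k$ and the endpoint from which it is disconnected must cross the bridge and therefore passes through the other endpoint, a node that can never enter $W$ because it is one of the two endpoints and $A$, $B$, $Z$ are disjoint; that path thus carries a node outside the current triple, giving the separation of Definition \ref{def:inde2}.

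The main obstacle I expect is bookkeeping the uniqueness hypothesis as it is threaded through the recursion, that is, establishing the purely graph-theoretic facts that (i) removing an interior node of the unique $A$--$B$ path leaves $A$ and $B$ separated inside $ABZ$, (ii) the $B$-neighbour split yields a genuinely unique subpath, and (iii) in the base case the edge is a bridge whose sides separate each conditioning node from at least one endpoint. Once these structural facts are secured, every antecedent used is an honest separation statement that Definition \ref{def:inde2} reads from $G$ and that $p$ therefore satisfies by soundness (Section \ref{sec:indep}), and the conclusion is assembled from the dependence base using only symmetry, decomposition, contraction2 and weak transitivity1. A useful sanity check on the delicate configuration of a node pendant on an interior vertex $v_i$ is that it is absorbed into the base conditioning set of a weak-transitivity1 step, where the crucial premise $A \ci_p B \mid Z \setminus v_i$ holds exactly because the path is broken as soon as $v_i$ is withheld.
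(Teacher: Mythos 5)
Your proposal is correct and takes essentially the same route as the paper's own proof: induction on the length of the unique path, with weak transitivity1 applied at an interior node for the inductive step (you fix the split at $C=v_{n-1}$, which neatly makes the terminal piece an instance of the base case, while the paper allows an arbitrary interior $C$), and a base case that bootstraps from the dependence base via symmetry, decomposition and contraction2 using independencies soundly read from $G$ through Definition \ref{def:inde2}. The only cosmetic difference lies in the base case, where you absorb the nodes of $Z$ one at a time according to which side of the bridge each falls on, whereas the paper absorbs them in two blocks ($Z'$, the part of $Z$ reachable from $A$ within $AZ'$, and then $Z\setminus Z'$, additionally using weak union); both derivations stay within the nine closure properties.
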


\begin{proof}
Let $X \de_G Y | Z$ hold due to a path $A:B$ with $A \in X$ and $B \in Y$. We prove the theorem by induction over the length of $A:B$. We first prove it for length one. Let $Z'$ denote the largest subset of $Z$ such that there is a path in $G$ from $A$ to every node in $Z'$ and all the nodes in these paths are in $A Z'$. Then, $B \ci_G Z'$ because, otherwise, there would be two paths between $A$ and $B$ whose nodes are all in $A B Z$, which would contradict that $X \de_G Y | Z$ holds due to $A:B$. Thus, $B \ci_p Z'$. Moreover, $A \nci_p B$ because $A$ and $B$ are adjacent in $G$. Then, $A Z' \nci_p B$ by symmetry and decomposition, which together with $B \ci_p Z'$ imply $A \nci_p B | Z'$ by symmetry and contraction2. Note that if $Z'=\emptyset$, then $A \nci_p B$ directly implies $A \nci_p B | Z'$. In any case, $A \nci_p B | Z'$ implies $A \nci_p B (Z \setminus Z') | Z'$ by decomposition. Now, note that $A Z' \ci_G Z \setminus Z'$ by definition of $Z'$ and thus $A Z' \ci_p Z \setminus Z'$, which implies $A \ci_p Z \setminus Z' | Z'$ by symmetry and weak union, which together with $A \nci_p B (Z \setminus Z') | Z'$ imply $A \nci_p B | Z$ by contraction2. Note that if $Z \setminus Z'=\emptyset$, then $A \nci_p B | Z'$ directly implies $A \nci_p B | Z$. In any case, $A \nci_p B | Z$ implies $X \nci_p Y | Z$ by symmetry and decomposition.

Assume as induction hypothesis that the theorem holds when the length of $A:B$ is smaller than $n$. We now prove it for length $n$. Let $C$ be any node in $A:B$ except $A$ and $B$. Note that $C \in Z$ and thus $A \ci_G B | Z \setminus C$, which implies $A \ci_p B | Z \setminus C$. Moreover, note that $A \de_G C | Z \setminus C$ holds due to the subpath of $A:B$ between $A$ and $C$, which we denote as $A:C$. To see it, note that $A:C$ is the only path between $A$ and $C$ in $G$ whose nodes are all in $A Z$, because if there were two such paths then there would be two paths between $A$ and $B$ in $G$ whose nodes are all in $A B Z$, which would contradict that $X \de_G Y | Z$ holds due to $A:B$. Likewise, $C \de_G B | Z \setminus C$. Moreover, $A \de_G C | Z \setminus C$ and $C \de_G B | Z \setminus C$ imply respectively $A \nci_p C | Z \setminus C$ and $C \nci_p B | Z \setminus C$ by the induction hypothesis, which together with $A \ci_p B | Z \setminus C$ imply $A \nci_p B | Z$ by weak transitivity1, which implies $X \nci_p Y | Z$ by symmetry and decomposition.

Finally, note that the above derivation of $X \nci_p Y | Z$ only makes use of the dependence base of $p$ and the nine properties introduced at the beginning of this section. Thus, $X \nci_p Y | Z$ is in the WTC graphoid closure of the dependence base of $p$.
\end{proof}

Note that we do not make use of the composition property in the proof above. However, we do use the fact that the graphical criterion in Definition \ref{def:inde2} is sound. The proof of this fact in \citep[Proposition 2.2]{BanerjeeandRichardson2003,Kauermann1996} does make use of the composition property.

\begin{theorem}\label{the:complete}
Let $G$ be the covariance graph of a WTC graphoid $p$. If $X \nci_p Y | Z$ is in the WTC graphoid closure of the dependence base of $p$, then $X \de_G Y | Z$.
\end{theorem}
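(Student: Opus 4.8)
The plan is to prove the inclusion reverse to Theorem~\ref{the:sound}: writing $D_G$ for the set of triples $(X,Y,Z)$ of disjoint subsets of $V$ with $X \de_G Y | Z$, I would show that $D_G$ contains the dependence base of $p$ and is closed under the nine contrapositive properties listed at the start of this section, where every independence antecedent $X \ci Y | Z$ is read off $G$ through Definition~\ref{def:inde2}. Since the WTC graphoid closure of the dependence base is by definition the smallest set of dependencies that contains the base and is closed under those nine properties, this immediately yields $\text{closure} \subseteq D_G$, which is the assertion. The argument is an induction on the length of the derivation of $X \nci_p Y | Z$, with a case split according to the last property applied. Throughout I would use the path reading of independence antecedents: by Definition~\ref{def:inde2}, $X \ci_G Y | Z$ holds exactly when no path of $G$ between a node of $X$ and a node of $Y$ has all its nodes in $XYZ$.

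For the base case, each edge $A - B$ of $G$ is the unique path between $A$ and $B$ whose nodes lie in $AB$, so $A \de_G B | \emptyset$ and the dependence base lies in $D_G$. The structural rules are light: symmetry and decomposition hold by reusing the same witnessing pair $(A,B)$, since enlarging the second argument changes neither the allowed node set $ABZ$ nor the uniqueness requirement; weak union needs a little surgery, truncating the witnessing path at the first node where it enters the block released from the conditioning side and checking that uniqueness of the original path descends to the truncation.

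The rules carrying an independence antecedent (the two contractions, intersection and composition) all follow one pattern, which I would isolate once and for all. The antecedent first forces the witnessing endpoint of the surviving dependence into the correct block; for instance, in composition, if $X \de_G YW | Z$ held through $B \in Y$ then its witnessing path would lie inside $XYZ$ and contradict $X \ci_G Y | Z$, so necessarily $B \in W$ and the same witness serves $X \de_G W | Z$. The antecedent then forbids the newly conditioned nodes from occurring on any competing path, so that uniqueness is preserved when the allowed node set is enlarged. Weak transitivity2 turns out to be vacuous in $D_G$: its dependence antecedents $X \de_G K | Z$ and $K \de_G Y | Z$ already supply, by concatenation at $K$, a path between $X$ and $Y$ with all nodes in $XYZK$, contradicting its independence antecedent $X \ci_G Y | ZK$, so the premises are never jointly met.

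The main obstacle is weak transitivity1, the only genuinely combinatorial step, where from $X \de_G K | Z$, $K \de_G Y | Z$ and $X \ci_G Y | Z$ I must produce a \emph{single} path witnessing $X \de_G Y | ZK$. Taking the unique witnesses $A : K$ (nodes in $AKZ$) and $K : B$ (nodes in $KBZ$), I would first argue they share no internal node: a common node $w \in Z$ would let me splice an $A : B$ walk that avoids $K$, hence a path with nodes in $ABZ \subseteq XYZ$, contradicting $X \ci_G Y | Z$. Their concatenation is therefore a simple path $A : B$ through $K$ with nodes in $ABZK$. For uniqueness I would show that any path between $A$ and $B$ with nodes in $ABZK$ must pass through $K$, since otherwise it lies in $ABZ \subseteq XYZ$ and again contradicts $X \ci_G Y | Z$, and that the two arcs into which $K$ splits such a path have nodes in $AKZ$ and $KBZ$ respectively, so by the uniqueness of the two sub-witnesses they coincide with $A : K$ and $K : B$. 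Hence the concatenated path is the only admissible one and $X \de_G Y | ZK$ holds. The care needed to keep all spliced walks simple and to verify that each sub-arc stays inside its prescribed node set, while leaning repeatedly on the separation statement to kill alternative routes, is where the real work of the proof lies.
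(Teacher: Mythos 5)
Your proof is correct, but it takes a genuinely different route from the paper's. The paper never verifies the nine closure properties on $G$ directly: it introduces the concentration graph $H$ with the same vertices and edges as $G$, observes the duality $X \de_G Y | Z$ iff $X \de_H Y | V \setminus X Y Z$, and then imports all nine properties from the already-established fact that the concentration-graph criterion of Definition~\ref{def:de2} satisfies them \citep[Theorem 6, Example 3]{Pennaetal.2009}; under this translation each covariance rule becomes a dual concentration rule (decomposition becomes weak union, contraction1 becomes contraction2, intersection becomes composition, and so on), so the path combinatorics is entirely outsourced to the earlier reference. You instead prove closure of your set $D_G$ by direct path arguments on $G$, and the sketched steps do go through: the endpoint-forcing pattern for contraction1/2, intersection and composition is sound (when the allowed node set $A B Z$ is unchanged the same witness serves verbatim; when it is enlarged, a competing path entering the newly allowed block would, truncated at its first such node, yield a connecting path contradicting the independence antecedent); your splice argument for weak transitivity1 works because the two unique witnesses can meet only in $Z$ besides $K$, which would produce a $K$-avoiding $A$--$B$ path inside $X Y Z$ contradicting $X \ci_G Y | Z$, and any competing path with nodes in $A B Z K$ must pass through $K$ and split into segments with nodes in $A K Z$ and $K B Z$, hence coincide with the sub-witnesses; and weak transitivity2 is indeed vacuous by concatenation. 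The one place your sketch hides real work is uniqueness in weak union: a second $A$--$B'$ path $\rho$ with nodes in $A B' Z$ need not concatenate with the tail of the original witness into a simple path, so one must shortcut at the first node of $\rho$ lying on the tail and observe that the resulting $A$--$B$ path avoids $B'$ (unless $\rho$ meets the tail only at $B'$) and therefore differs from the original witness, contradicting its uniqueness; this succeeds but should be written out. Overall, your approach buys a self-contained, elementary proof independent of \citep{Pennaetal.2009}, while the paper's buys brevity and makes the covariance/concentration duality explicit.
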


\begin{proof}
Let $H$ denote the concentration graph that has the same vertices and edges as $G$. In other words, $G$ and $H$ are the same UG but with different interpretations. Note that $X \de_G Y | Z$ iff $X \de_H Y | V \setminus X Y Z$, which follows from the fact that $X \de_G Y | Z$ iff $con_G(X, Y | V \setminus X Y Z)$ iff $con_H(X, Y | V \setminus X Y Z)$ iff $X \de_H Y | V \setminus X Y Z$.

Clearly, all the dependencies in the dependence base of $p$ are identified by the graphical criterion in Definition \ref{def:de}. Therefore, it only remains to prove that this graphical criterion satisfies the nine properties introduced at the beginning of this section. We do so below with the help of $H$. Note that the graphical criterion in Definition \ref{def:de2} applied to $H$ satisfies the nine properties introduced at the beginning of this section \citep[Theorem 6, Example 3]{Pennaetal.2009}.

\begin{itemize}
\item Symmetry $Y \de_G X | Z \Rightarrow X \de_G Y | Z$.

Trivial.

\item Decomposition $X \de_G Y | Z \Rightarrow X \de_G Y W | Z$.

$X \de_G Y | Z$ implies $X \de_H Y | V \setminus X Y Z$ by definition, which implies $X \de_H Y W | V \setminus X Y Z W$ by weak union, which implies $X \de_G Y W | Z$ by definition.

\item Weak union $X \de_G Y | Z W \Rightarrow X \de_G Y W | Z$.

$X \de_G Y | Z W$ implies $X \de_H Y | V \setminus X Y Z W$ by definition, which implies $X \de_H Y W | V \setminus X Y Z W$ by decomposition, which implies $X \de_G Y W | Z$ by definition.

\item Contraction1 $X \de_G Y W | Z \land X \ci_G Y | Z W \Rightarrow X \de_G W | Z$.

$X \de_G Y W | Z$ and $X \ci_G Y | Z W$ imply respectively \mbox{$X \de_H Y W | V \setminus X Y Z W$} and $X \ci_H Y | V \setminus X Y Z W$ by definition, which imply \mbox{$X \de_H W | V \setminus X Z W$} by contraction2, which implies $X \de_G W | Z$ by definition.

\item Contraction2 $X \de_G Y W | Z \land X \ci_G W | Z \Rightarrow X \de_G Y | Z W$.

$X \de_G Y W | Z$ and $X \ci_G W | Z$ imply respectively $X \de_H Y W | V \setminus X Y Z W$ and $X \ci_H W | V \setminus X Z W$ by definition, which imply $X \de_H Y | V \setminus X Y Z W$ by contraction1, which implies $X \de_G Y | Z W$ by definition.

\item Intersection $X \de_G Y W | Z \land X \ci_G Y | Z W \Rightarrow X \de_G W | Z Y$.

$X \de_G Y W | Z$ and $X \ci_G Y | Z W$ imply respectively \mbox{$X \de_H Y W | V \setminus X Y Z W$} and $X \ci_H Y | V \setminus X Y Z W$ by definition, which imply \mbox{$X \de_H W | V \setminus X Y Z W$} by composition, which implies $X \de_G W | Z Y$ by definition.

\item Weak transitivity1 $X \de_G K | Z \land K \de_G Y | Z \land X \ci_G Y | Z \Rightarrow X \de_G Y | Z K$.

$X \de_G K | Z$, $K \de_G Y | Z$ and $X \ci_G Y | Z$ imply respectively \mbox{$X \de_H K | V \setminus X Z K$}, $K \de_H Y | V \setminus Y Z K$ and $X \ci_H Y | V \setminus X Y Z$. Moreover, \mbox{$X \de_H K | V \setminus X Z K$} and $K \de_H Y | V \setminus Y Z K$ imply respectively \mbox{$X \de_H Y K | V \setminus X Y Z K$} and $X K \de_H Y | V \setminus X Y Z K$ by symmetry and weak union, which together with $X \ci_H Y | V \setminus X Y Z$ imply respectively \mbox{$X \de_H K | V \setminus X Y Z K$} and $K \de_H Y | V \setminus X Y Z K$ by symmetry and contraction1, which together with $X \ci_H Y | V \setminus X Y Z$ imply $X \de_G Y | V \setminus X Y Z K$ by weak transitivity2, which implies $X \de_G Y | Z K$ by definition.

\item Weak transitivity2 $X \de_G K | Z \land K \de_G Y | Z \land X \ci_G Y | Z K \Rightarrow X \de_G Y | Z$.

Trivial because the antecedent involves a contradiction. To see it, note that $X \de_G K | Z$, $K \de_G Y | Z$ and $X \ci_G Y | Z K$ imply respectively \mbox{$X \de_H K | V \setminus X Z K$}, $K \de_H Y | V \setminus Y Z K$ and $X \ci_H Y | V \setminus X Y Z K$. Moreover, $X \de_H K | V \setminus X Z K$ and $K \de_H Y | V \setminus Y Z K$ imply respectively \mbox{$X \de_H Y K | V \setminus X Y Z K$} and $X K \de_H Y | V \setminus X Y Z K$ by symmetry and weak union, which together with $X \ci_H Y | V \setminus X Y Z K$ imply respectively $X \de_H K | V \setminus X Y Z K$ and $K \de_H Y | V \setminus X Y Z K$ by symmetry and composition, which together with $X \ci_H Y | V \setminus X Y Z K$ imply a contradiction as shown in \citep[Theorem 6]{Pennaetal.2009}.

\item Composition $X \de_G Y W | Z \land X \ci_G Y | Z \Rightarrow X \de_G W | Z$.

$X \de_G Y W | Z$ and $X \ci_G Y | Z$ imply respectively $X \de_H Y W | V \setminus X Y Z W$ and $X \ci_H Y | V \setminus X Y Z$ by definition, which imply $X \de_H W | V \setminus X Z W$ by intersection, which implies $X \de_G W | Z$ by definition.
\end{itemize}
\end{proof}

While Theorem \ref{the:sound} was somewhat expected because if there is a single path between $A$ and $B$ in $G$ whose nodes are all in $A B Z$ then there is no possibility of path cancellation, the combination of Theorems \ref{the:sound} and \ref{the:complete} is rather exciting: We now have a simple graphical criterion to decide whether a given dependence is or is not in the WTC graphoid closure of the dependence base of $p$, i.e. we do not need to try to find a derivation of it, which is usually a tedious task.

We devote the rest of this section to some observations that follow from the previous two theorems. A sensible question to ask is whether the graphical criterion in Definition \ref{def:de} is complete in the sense of being able to identify all the dependencies shared by all the WTC graphoids whose covariance graph is a given UG. The answer is no. An illustrative example follows.

\begin{example}\label{exa:example2}
Let $G$ denote the UG in Table \ref{tab:example}. Consider any WTC graphoid $p$ whose covariance graph is $G$. Such WTC graphoids exist by Theorems \ref{the:fcvg} and \ref{the:fcvg2}. Then, $A \nci_p B | C$ or $A \nci_p C | B$ because otherwise $A \ci_p B C$ by intersection, which is a contradiction because $A \de_G B C$ implies $A \nci_p B C$ by Theorem \ref{the:sound}. Assume $A \nci_p B | C$. Note that $B \de_G D | C$ implies $B \nci_p D | C$ by Theorem \ref{the:sound}. Then, $A \nci_p B | C$ and $B \nci_p D | C$ together with $A \ci_p D | C$, which follows from $A \ci_G D | C$, imply $A \nci_p D | B C$ by weak transitivity1. Likewise, $A \nci_p E | B C$ when assuming $A \nci_p C | B$. Then, $A \nci_p D | B C$ or $A \nci_p E | B C$, which imply $A \nci_p D E | B C$ by decomposition. However, $A \de_G D E | B C$ does not hold.
\end{example}

Note that the fact that the graphical criterion Definition \ref{def:de} is not complete in the latter sense implies that it is neither complete in the more stringent sense of being able to identify all the dependencies in the WTC graphoid at hand. Actually, no sound graphical criterion for reading dependencies from the covariance graph of a WTC graphoid can be complete in this more stringent sense. To see it, consider again Example \ref{exa:example}. Let us assume that we are dealing with $p'$. Then, no sound graphical criterion entails $A \nci_{p'} C | B$ from $G'$ because this dependence does not hold in $p$, and it is impossible to know whether we are dealing with $p$ or $p'$ on the sole basis of $G'$.

It is worth mentioning that the graphical criteria in Definitions \ref{def:de} and \ref{def:de2} complement each other, as each of them can read dependencies than the other cannot. To see it, consider the WTC graphoid $p$ in Example \ref{exa:example}. Then, $A \de_G B$ and thus $A \nci_p B$ by Theorem \ref{the:sound}. However, this dependence cannot be derived from $H$ because $A \de_H B$ does not hold. On the other hand, $A \de_H B | C D$ and thus $A \nci_p B | C D$ by \citep[Theorem 5]{Pennaetal.2009}. However, this dependence cannot be derived from $G$ because $A \de_G B | C D$ does not hold.

Again, a sensible question to ask is whether the joint use of the graphical criteria in Definitions \ref{def:de} and \ref{def:de2} is complete in the sense of being able to identify all the dependencies shared by all the WTC graphoids whose covariance and concentration graphs are two given UGs. The answer is no. An illustrative example follows.

\begin{example}\label{exa:example3}
Let $G$ denote the UG in Table \ref{tab:example}. Let $H$ denote the complete graph over $\{A, B, C, D, E, F\}$. Consider any WTC graphoid $p$ whose covariance and concentration graphs are $G$ and $H$, respectively. Such WTC graphoids exist. To see it, it suffices to take any WTC graphoid that is faithful to $G$, which exists by Theorems \ref{the:fcvg} and \ref{the:fcvg2}. Recall that we have proven in Example \ref{exa:example2} that $A \ci_p D E | B C$. However, neither $A \de_G D E | B C$ nor $A \de_H D E | B C$ holds.
\end{example}

Note that the fact that the joint use of the graphical criteria in Definitions \ref{def:de} and \ref{def:de2} is not complete in the latter sense implies that it is neither complete in the more stringent sense of being able to identify all the dependencies in the WTC graphoid at hand. Actually, no pair of sound graphical criteria for reading dependencies from the covariance and concentration graphs of a WTC graphoid can be complete in this more stringent sense. To see it, consider again Example \ref{exa:example}. Let us assume that we are dealing with $p'$. Then, no pair of sound graphical criteria entails $A \nci_{p'} C | B$ from $G'$ and $H'$ because this dependence does not hold in $p$, and it is impossible to know whether we are dealing with $p$ or $p'$ on the sole basis of $G'$ and $H'$.

The following corollary extends to WTC graphoids a result originally proven in \citep[Theorem 3]{MaloucheandRajaratnam2009} for Gaussian probability distributions. The extension is straightforward thanks to the graphical criterion in Definition \ref{def:de}.

\begin{corollary}
Let $G$ be the covariance graph of a WTC graphoid $p$. If $G$ is a forest, then $p$ is faithful to $G$.
\end{corollary}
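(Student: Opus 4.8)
The plan is to establish faithfulness by proving the single implication that is still missing. Since $p$ is a WTC graphoid and $G$ is its covariance graph, $p$ is already Markov wrt $G$ (as recorded in Section \ref{sec:preliminaries}), so $X \ci_G Y | Z$ implies $X \ci_p Y | Z$ for all disjoint $X$, $Y$, $Z$. It therefore remains to prove the converse, or equivalently its contrapositive: if $X \nci_G Y | Z$ then $X \nci_p Y | Z$. I would obtain this dependence from the graphical criterion in Definition \ref{def:de} together with Theorem \ref{the:sound}, since that theorem places $X \nci_p Y | Z$ in the WTC graphoid closure of the dependence base, hence makes it a genuine dependence of $p$, whenever $X \de_G Y | Z$ holds. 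The whole task thus reduces to showing that, when $G$ is a forest, $X \nci_G Y | Z$ implies $X \de_G Y | Z$.

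To do this I would first unfold the two graphical statements. By Definition \ref{def:inde2}, $X \nci_G Y | Z$ means that there is a path in $G$ between some node of $X$ and some node of $Y$ all of whose nodes lie in $X Y Z$. By Definition \ref{def:de}, $X \de_G Y | Z$ asks for nodes $A \in X$ and $B \in Y$ joined by a single path whose nodes all lie in $A B Z$. So the gap to bridge is from a path whose interior may still visit $X \cup Y$ to one whose interior is confined to $Z$, together with the uniqueness (``single'') requirement.

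The key step is a shortest-path argument. Among all paths in $G$ joining a node of $X$ to a node of $Y$ with every node in $X Y Z$ --- a nonempty family by $X \nci_G Y | Z$ --- I would pick one of minimum length, say $Q$ from $A \in X$ to $B \in Y$. If an interior node $C$ of $Q$ belonged to $X$, the subpath of $Q$ from $C$ to $B$ would be a strictly shorter path of the same family, contradicting minimality; the symmetric argument rules out $C \in Y$. Hence every interior node of $Q$ lies in $Z$, so $Q$ is a path from $A$ to $B$ with all nodes in $A B Z$.

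The forest hypothesis is what finishes the argument, and it does so precisely where it matters: in a forest there is at most one path between any two nodes, so $Q$ is automatically the unique $A$--$B$ path, and the ``single path'' clause of $con_G(X, Y | V \setminus X Y Z)$ is satisfied for free. This yields $X \de_G Y | Z$, whence $X \nci_p Y | Z$ by Theorem \ref{the:sound}, completing the converse and establishing faithfulness. I expect the only delicate point to be the bookkeeping in the shortest-path argument --- ensuring the minimality is taken over the right family and that the endpoints remain in $X$ and $Y$ --- since the role of acyclicity is conceptually transparent: it is exactly the absence of a second path that could otherwise cause cancellation and block $\de_G$.
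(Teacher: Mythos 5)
Your proof is correct and follows essentially the same route as the paper's: both reduce faithfulness to showing $X \nci_G Y | Z \Rightarrow X \de_G Y | Z$ when $G$ is a forest (uniqueness of paths supplying the ``single path'' clause) and then invoke Theorem \ref{the:sound} together with the Markov property of WTC graphoids. If anything, your shortest-path argument explicitly justifies the extraction of a path with all nodes in $A B Z$, a step the paper's proof merely asserts.
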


\begin{proof}
Assume to the contrary that $p$ is not faithful to $G$. Since $G$ is the covariance graph of $p$, the previous assumption is equivalent to assume that there exist three disjoint subsets of $V$, here denoted $X$, $Y$ and $Z$, such that $X \nci_G Y | Z$ but $X \ci_p Y | Z$. However, $X \nci_G Y | Z$ implies that there must exist a path in $G$ between some node $A \in X$ and some node $B \in Y$ whose nodes are all in $A B Z$. Furthermore, since $G$ is a forest, that must be the only such path between $A$ and $B$ in $G$. However, this implies $X \de_G Y | Z$ and thus $X \nci_p Y | Z$ by Theorem \ref{the:sound}, which is a contradiction.
\end{proof}

Given the covariance graph $G$ of a WTC graphoid $p$, $X \nci_G Y | Z$ does not imply $X \nci_p Y | Z$. This is actually the reason of being of this paper. However, if $G$ is a forest, then the previous corollary proves that $X \nci_G Y | Z$ does imply $X \nci_p Y | Z$ and, moreover, that this way of reading dependencies from $G$ is complete in the strictest sense discussed above. The following corollary extends to WTC graphoids a result originally proven in \citep[Lemma 5]{MaloucheandRajaratnam2009} for Gaussian probability distributions. The extension is straightforward thanks to the graphical criteria in Definitions \ref{def:de} and \ref{def:de2}.

\begin{corollary}
Let $G$ and $H$ be, respectively, the covariance and concentration graphs of a WTC graphoid $p$. Then, $G$ and $H$ have the same connected components. Moreover, if a connected component in $G$ (respectively $H$) is a tree then the corresponding connected component in $H$ (respectively $G$) is the complete graph.
\end{corollary}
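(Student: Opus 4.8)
The plan is to prove the two assertions separately, both reducing to the single-path dependency criteria already established. For the first assertion I will show that a subset $S \subseteq V$ has no $G$-edge joining $S$ to $V \setminus S$ if and only if $S \ci_p V \setminus S$ if and only if $S$ has no $H$-edge joining $S$ to $V \setminus S$; since the connected components of an UG are exactly the minimal nonempty subsets having no edge to their complement, this forces $G$ and $H$ to induce the same partition of $V$. For the second assertion I will exploit the fact that a tree has a unique path between any two of its nodes, so that the single-path hypothesis in Definitions~\ref{def:de} and~\ref{def:de2} is met for \emph{every} pair of nodes lying in a tree component.

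First I would establish the cut equivalence. If no $G$-edge crosses the boundary of $S$, then no path in $G$ joins a node of $S$ to a node of $V\setminus S$, so $S \ci_G V\setminus S$ holds vacuously by Definition~\ref{def:inde2} (note that $V\setminus(S\cup(V\setminus S))=\emptyset$), whence $S \ci_p V\setminus S$ because $p$ is Markov wrt $G$. Conversely, $S \ci_p V\setminus S$ gives $A \ci_p B$ for every $A\in S$, $B\in V\setminus S$ by decomposition, so no $G$-edge crosses. The same equivalence holds for $H$: if no $H$-edge crosses then $sep_H(S,V\setminus S\mid\emptyset)$ and $S\ci_p V\setminus S$ since $p$ is Markov wrt $H$; conversely, from $S\ci_p V\setminus S$ I obtain $A \ci_p B \mid V\setminus AB$ for each such $A,B$ by repeated use of symmetry and weak union, so no $H$-edge crosses. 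Chaining the two equivalences shows that the crossing-free bipartitions of $G$ and of $H$ coincide, and hence so do their connected components.

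Next I would treat the tree case. Let $S$ be a connected component of $G$ whose induced subgraph $G_S$ is a tree, and fix any $A,B\in S$. Because $S$ is a component, no $G$-edge leaves $S$, so every route in $G$ between $A$ and $B$ stays inside $S$, i.e. inside the tree $G_S$; as a tree has a unique path between any two of its nodes, there is a single path between $A$ and $B$ in $G$ and all its nodes lie in $S\subseteq AB(V\setminus AB)$. By the rephrasing of Definition~\ref{def:de} this yields $A \de_G B \mid V\setminus AB$, whence $A \nci_p B \mid V\setminus AB$ by Theorem~\ref{the:sound}, so $A$ and $B$ are adjacent in $H$. Since $A,B$ are arbitrary and the components agree by the first part, the component of $H$ corresponding to $S$ is complete. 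The parenthetical claim is dual: if a component $T$ of $H$ is a tree, the unique path between any $A,B\in T$ gives $con_H(A,B\mid\emptyset)$, hence $A\de_H B\mid\emptyset$ by Definition~\ref{def:de2} and $A\nci_p B$ by \citep[Theorem~5]{Pennaetal.2009}, so $T$ induces a complete subgraph of $G$.

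The main obstacle I anticipate is not the graphoid bookkeeping but making the single-path hypothesis bite globally: Definitions~\ref{def:de} and~\ref{def:de2} demand that the path be the \emph{only} path in the entire graph, so I must exclude alternative routes that detour outside the tree component. The observation that dispatches this is precisely that a connected component has no edges to the rest of the graph, which confines every route between two of its nodes to the component and hence to the tree, where uniqueness is automatic.
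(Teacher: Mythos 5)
Your proof is correct, and its two halves relate differently to the paper's own argument. The tree-to-complete-graph part is essentially the paper's proof: for $A,B$ in a tree component of $G$ you establish $A \de_G B \mid V \setminus A B$ and invoke Theorem \ref{the:sound} to get $A \nci_p B \mid V \setminus A B$, hence adjacency in $H$, and dually via Definition \ref{def:de2} and \citep[Theorem 5]{Pennaetal.2009} for a tree component of $H$; if anything you are more explicit than the paper in verifying the single-path hypothesis, observing that a component has no crossing edges, so every path between two of its nodes is confined to the tree, where uniqueness is automatic. For the equality of connected components, however, you take a genuinely different route. The paper argues by contradiction through the dependence criteria: $A, B$ in the same component of $G$ yields $A \de_G B \mid Z$ for some $Z$ (implicitly via a shortest, hence chordless, path so that the single-path condition is met), whence $A \nci_p B \mid Z$ by Theorem \ref{the:sound}, contradicting $A \ci_H B \mid Z$ and thus $A \ci_p B \mid Z$ if $A$ and $B$ lay in different components of $H$, and symmetrically with \citep[Theorem 5]{Pennaetal.2009}. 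Your cut-equivalence argument instead uses only the Markov properties of $p$ wrt $G$ and $H$ plus symmetry, decomposition and weak union: a set $S$ is crossing-free in $G$ iff $S \ci_p V \setminus S$ iff $S$ is crossing-free in $H$ (the step from $S \ci_p V \setminus S$ to $A \ci_p B \mid V \setminus A B$ by symmetry and weak union is exactly right, and the identification of components as the minimal nonempty crossing-free sets is sound). This buys two things: it sidesteps the step the paper leaves implicit, namely that two nodes in a common component satisfy the single-path criterion for \emph{some} $Z$, and it shows the first assertion requires neither weak transitivity nor the dependence-reading machinery at all — the Markov properties and the graphoid axioms suffice, with composition entering only through the soundness of the covariance-graph criterion. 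What the paper's route buys in exchange is uniformity: a single mechanism, the dependence criteria of Definitions \ref{def:de} and \ref{def:de2}, drives both assertions in a few lines.
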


\begin{proof}
First, we prove that $G$ and $H$ have the same connected components. If two nodes $A$ and $B$ belong to the same connected component in $G$, then $A \de_G B | Z$ for some $Z \subseteq V \setminus A B$ and thus $A \nci_p B | Z$ by Theorem \ref{the:sound}. However, if $A$ and $B$ belong to different connected components in $H$, then $A \ci_H B | Z$ and thus $A \ci_p B | Z$, which is a contradiction. On the other hand, if two nodes $A$ and $B$ belong to the same connected component in $H$, then $A \de_H B | Z$ for some $Z \subseteq V \setminus A B$ and thus $A \nci_p B | Z$ by \citep[Theorem 5]{Pennaetal.2009}. However, if $A$ and $B$ belong to different connected components in $G$, then $A \ci_G B | Z$ and thus $A \ci_p B | Z$, which is a contradiction.

Now, take any connected component $\mathcal{C}$ in $G$ that is a tree. We prove that the corresponding connected component $\mathcal{D}$ in $H$ is the complete graph. If two nodes $A$ and $B$ belong to $\mathcal{C}$, then $A \de_G B | V \setminus A B$ and thus $A \nci_p B | V \setminus A B$ by Theorem \ref{the:sound}, which implies that $A$ and $B$ are adjacent in $\mathcal{D}$.

Finally, take any connected component $\mathcal{D}$ in $H$ that is a tree. We prove that the corresponding connected component $\mathcal{C}$ in $G$ is the complete graph. If two nodes $A$ and $B$ belong to $\mathcal{D}$, then $A \de_H B$ and thus $A \nci_p B$ by \citep[Theorem 5]{Pennaetal.2009}, which implies that $A$ and $B$ are adjacent in $\mathcal{C}$.
\end{proof}

Note that the opposite of the second statement in the previous corollary is not true. The following example illustrates this.

\begin{example}
Let $G$ (respectively $H$) be the covariance (respectively concentration) graph of a WTC graphoid $p$. Assume that $G$ (respectively $H$) is the complete graph and $p$ is faithful to it. Such WTC graphoids exist by Theorems \ref{the:fcvg} and \ref{the:fcvg2} (respectively Theorems \ref{the:fcg} and \ref{the:fcg2}). Then, $p$ has no independencies. Consequently, $H$ (respectively $G$) is the complete graph and $p$ is faithful to it.
\end{example}

\section{Discussion}\label{sec:discussion}

In this paper, we have provided new insight into covariance graphs by introducing a graphical criterion for reading dependencies from the covariance graph of a WTC graphoid. We have shown that WTC graphoids are not a too restrictive family of probability distributions by showing that it includes interesting discrete, Gaussian, and continuous but non-Gaussian probability distributions. We have proven that the new graphical criterion is sound and complete in certain sense. In order to prove these properties, we have had to prove first that the graphical criterion in \citep{BanerjeeandRichardson2003,Kauermann1996} for reading independencies from the covariance graph of a WTC graphoid is complete in certain sense. We have done so by proving that there are discrete, Gaussian, and continuous but non-Gaussian probability distributions that are faithful to any covariance graph. This result is also important because it implies that there exist probability distributions that covariance graphs can represent exactly but CGs cannot. Therefore, covariance graphs complement CGs. The following example illustrates this.

\begin{example}
Consider the covariance graph $G = \{A - B, B - C, C - D, D - A\}$. Consider any CG $H$ that represents the same independencies as $G$. Note that $H$ must have some edge between $A$ and $B$, $B$ and $C$, $C$ and $D$, and $D$ and $A$. However, $A$ and $C$ cannot be adjacent in $H$ because $A \ci_G C$. Likewise, $B$ and $D$ cannot be adjacent in $H$ because $B \ci_G D$. Then, $H = \{A \rightarrow B, B \leftarrow C$, $A \rightarrow D, D \leftarrow C\}$ because otherwise either $A \ci_H C | B$ or $A \ci_H C | D$ or $A \ci_H C | B D$, whereas $A \nci_G C | B$ and $A \nci_G C | D$ and $A \nci_G C | B D$. However, such an $H$ implies $B \nci_H D$ whereas $B \ci_G D$. Consequently, no CG can represent the same independencies as $G$. This implies that every probability distribution that is faithful to $G$ (recall that such probability distributions exist by Theorem \ref{the:fcg2}) is not faithful to any CG.
\end{example}

It is fair mentioning that we are not the first to note that covariance graphs complement other more popular graphical models. For instance, it follows from \citep{DrtonandRichardson2003,PearlandWermuth1994} that covariance graphs complement DAGs. Our example above simply extends this observation to CGs.

Another consequence of the faithfulness result in Theorem \ref{the:fcg2} is that it proves wrong the misconception that covariance graphs are densely connected because their edges represent marginal dependencies. Specifically, the theorem implies that there are probability distributions that are faithful to any covariance graph, no matter its topology.

Interestingly, the graphical criterion developed in this paper is dual to the one presented in \citep{Pennaetal.2009} for reading dependencies from the concentration graph of a WTC graphoid. This duality resembles the duality existing between the graphical criteria for reading independencies from concentration and covariance graphs \citep{BanerjeeandRichardson2003,Kauermann1996}. We have also shown that the new graphical criterion and the one presented in \citep{Pennaetal.2009} complement each other, as there may be dependencies that only one of them can identify.

Finally, we have pointed out some limitations of the graphical criterion introduced in this paper that suggest future lines of research. For instance, it remains an open question whether it is possible to develop a similar graphical criterion that is complete in a stricter sense than the one used in this paper. It also remains an open question whether our faithfulness result in Appendix A for regular Gaussian probability distributions can be extended to discrete probability distributions with the help of the parameterizations in \citep{Lupparellietal.2009}. Another line of action is the application of our graphical criterion in bioinformatics. In such an application, the covariance graph has to be learnt from gene expression data via, for instance, hypothesis tests. The data available for learning is typically scarce due to the high cost associated with its production. In this scenario, covariance graphs are easier to learn than Bayesian networks or concentration graphs, which are the graphical models commonly used in bioinformatics, because the former involves testing for marginal (in)dependencies whereas the latter involve testing for conditional (in)dependencies. We do not suggest with this that one should quit using Bayesian networks and concentration graphs in bioinformatics. Specifically, Bayesian networks have an important advantage over covariance graphs, namely that they can provide us with insight into the mechanistic or causal process underlying the data at hand. What we suggest is that when the data at hand is not considered enough to learn a reliable Bayesian network, one may be willing to learn a less informative but more reliable model such as a covariance graph, particularly now that we have graphical criteria for reading both dependencies and independencies off it.

Note that this paper only studies the structure of covariance graphs and, thus, it does not deal with their parameterization and/or parameter estimation. The interested reader is referred to \citep{Chaudhurietal.2007,DrtonandRichardson2003} for Gaussian models and \citep{DrtonandRichardson2008} for discrete models. However, it may be worth warning here that finding the maximum likelihood estimates of the parameters of a covariance graph can be hard, depending on the parameterization considered. This is particularly true for discrete models \citep{DrtonandRichardson2008}. The reason is that a marginal independence can imply complicated parameter constraints in some parameterizations, because a marginal dependence is a property of the joint probability distribution rather than of the relevant marginal probability distribution. An early work that showed the latter is \citep{Zentgraf1975}, where the author gives an example of two-way interactions implying three-way interaction. In summary, learning the structure of a covariance graph may be simpler than learning the structure of a concentration graph. However, estimating the parameters of the former may be harder than estimating the parameters of the latter. 

\section{Appendix A}

We strengthen the second statement in Theorem \ref{the:fcvg} by proving that, in certain measure-theoretic sense, almost all the regular Gaussian probability distributions that are Markov wrt a covariance graph are faithful to it. Although this result is not used in this paper, we consider it to be interesting in its own and, thus, we decide to report on it.

We start by recalling some results from matrix theory. See, for instance, \citep{HornandJohnson1985} for more information. Let $M=(M_{i,j})_{i, j \in V}$ denote a square matrix. Let $M_{I,J}$ with $I, J \subseteq V$ denote its submatrix $(M_{i,j})_{i \in I, j \in J}$. The determinant of $M$ can recursively be computed, for fixed $i \in V$, as $det(M)=\sum_{j \in V} (-1)^{i+j} M_{i,j} det(M_{\setminus (ij)})$, where $M_{\setminus (ij)}$ denotes the matrix produced by removing the row $i$ and column $j$ from $M$. Note then that $det(M)$ is a real polynomial in the entries of $M$. If $det(M) \neq 0$ then the inverse of $M$ can be computed as $(M^{-1})_{i,j}=(-1)^{i+j} det(M_{\setminus (ji)})/det(M)$ for all $i, j \in V$. We say that $M$ is strictly diagonally dominant if $abs(M_{i,i}) > \sum_{\{j \in V \: : \: j \neq i\}} abs(M_{i,j})$ for all $i \in V$, where $abs()$ denotes absolute value. A matrix $M$ is Hermitian if it is equal to the matrix resulting from, first, transposing $M$ and, then, replacing each entry by its complex conjugate. Clearly, a real symmetric matrix is Hermitian. A real symmetric $N \times N$ matrix $M$ is positive definite if $x^T M x >0$ for all non-zero $x \in \mathbb{R}^N$.

We continue by proving some auxiliary results. We assume hereinafter that the sample space of each random variable in $U$ is $\mathbb{R}$. Let $\mathcal{N}(G)$ denote the set of regular Gaussian probability distributions $p$ such that $A \ci_p B$ for any two nodes $A$ and $B$ that are not adjacent in $G$. Note that these are exactly the regular Gaussian probability distributions that are Markov wrt $G$ \citep[Proposition 2.2]{BanerjeeandRichardson2003,Kauermann1996}. We parameterize each probability distribution $p \in \mathcal{N}(G)$ with its mean vector $\mu$ and covariance matrix $\Sigma$. Note that the values of some of these parameters are determined by the values of other parameters or by $G$. Specifically, the following constraints apply:
\begin{itemize}
\item[C1.] $\Sigma_{i,j}=\Sigma_{j,i}$ for all $i, j$ because $\Sigma$ is symmetric.
\item[C2.] $\Sigma_{i,j}=0$ for all $i, j$ such that $i$ and $j$ are not adjacent in $G$. To see it, recall that if $i$ and $j$ are not adjacent in $G$ then $i \ci_p j$ and, thus, $\Sigma_{i,j}=0$ \citep[Corollary 2.3]{Studeny2005}.
\end{itemize}

Hereinafter, the parameters whose values are not determined by the constraints above are called non-determined (nd) parameters. However, the values the nd parameters can take are further constrained by the fact that these values must correspond to some probability distribution in $\mathcal{N}(G)$. In other words, the values the nd parameters can take are constrained by the fact that $\Sigma$ must be positive definite. That is why the set of nd parameter values satisfying this requirement are hereinafter called the nd parameter space for $\mathcal{N}(G)$. We do not work out the inequalities defining the nd parameter space because they are irrelevant for our purpose. The number of nd parameters is what we call the dimension of $G$, and we denote it as $d$. Specifically, $d=2 |V| + |G|$ where $|V|$ and $|G|$ are, respectively, the number of nodes and edges in $G$:
\begin{itemize}
\item $|V|$ due to $\mu$.
\item $|V|$ due to entries in the diagonal of $\Sigma$.
\item $|G|$ due to the entries below the diagonal of $\Sigma$ that are not identically zero. To see this, recall from the constraint C2 that there is one entry below the diagonal of $\Sigma$ that is not identically zero for each undirected edge in $G$.
\end{itemize}

\begin{lemma}\label{lem:f1}
Let $G$ be a covariance graph of dimension $d$. The nd parameter space for $\mathcal{N}(G)$ has positive Lebesgue measure wrt $\mathbb{R}^d$.
\end{lemma}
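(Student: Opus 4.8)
The plan is to exhibit a single point inside the nd parameter space and then to use the fact that positive definiteness is an open condition, so that this point is surrounded by an open ball of parameter values that also correspond to probability distributions in $\mathcal{N}(G)$. Since a nonempty open subset of $\mathbb{R}^d$ contains a ball and thus has positive Lebesgue measure, this will suffice.

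First I would produce a convenient witness. Take the mean vector $\mu$ to be arbitrary, say $\mu=0$, and take $\Sigma$ to be the identity matrix. The identity matrix satisfies constraint C2 trivially, since all of its off-diagonal entries vanish, and it is positive definite. Hence the corresponding nd parameter values, namely all diagonal entries of $\Sigma$ equal to one, all edge entries equal to zero, and $\mu=0$, do lie in the nd parameter space for $\mathcal{N}(G)$, so this space is nonempty.

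Next I would argue that the nd parameter space is open in $\mathbb{R}^d$. The map sending a tuple of nd parameter values to the symmetric matrix $\Sigma$ obeying C1 and C2 is linear, hence continuous, and it does not depend on the $\mu$ coordinates. The set of positive definite symmetric matrices is open: if $\Sigma_0$ is positive definite then $\lambda := \min_{\|x\|=1} x^T \Sigma_0 x > 0$ by compactness of the unit sphere, and any symmetric $\Sigma$ with $\|\Sigma-\Sigma_0\|$ small enough (operator norm) satisfies $x^T \Sigma x \geq \lambda - \|\Sigma-\Sigma_0\| > 0$ for every unit vector $x$, so $\Sigma$ is positive definite as well. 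Equivalently, one may invoke that the leading principal minors are polynomials in the entries of $\Sigma$, hence continuous, and that their strict positivity characterizes positive definiteness. Consequently the nd parameter space, being the preimage of this open set under a continuous map, is itself open.

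Finally, since the nd parameter space is a nonempty open subset of $\mathbb{R}^d$, it contains an open ball and therefore has positive Lebesgue measure with respect to $\mathbb{R}^d$, which is the claim. The only point that requires care is the openness of positive definiteness; the identification of the witness and the measure-theoretic conclusion are then routine.
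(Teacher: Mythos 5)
Your proof is correct, but it takes a genuinely different route from the paper's. You exhibit a single witness (the identity covariance, which indeed satisfies C1 and C2 and corresponds to a distribution in $\mathcal{N}(G)$, since $\mathcal{N}(G)$ only requires independence for non-adjacent pairs) and then argue topologically: the extension map from nd parameters to $\Sigma$ is linear, the cone of positive definite symmetric matrices is open, hence the nd parameter space is a nonempty open subset of $\mathbb{R}^d$ and so has positive Lebesgue measure. The paper instead argues constructively: it exhibits an explicit box of parameter values --- diagonal entries of $\Sigma$ in $(|V|-1,\infty)$, the remaining nd entries of $\Sigma$ in $[-1,1]$, $\mu$ arbitrary --- and observes that every matrix so obtained is Hermitian, strictly diagonally dominant with positive diagonal, hence positive definite by a standard matrix-analysis result (Horn and Johnson, Corollary 7.2.3); this box has positive volume, which gives the claim directly. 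Your approach buys a slightly stronger conclusion (the nd parameter space is open, so \emph{every} point of it, not just your witness, has a full-measure neighborhood inside the space) and avoids citing the diagonal dominance criterion; the paper's approach buys explicitness, producing concrete intervals on which the nd parameters may vary independently without any appeal to a topological openness lemma. Both arguments are complete and either would serve; your openness step is carefully justified (compactness of the unit sphere, or continuity of the leading principal minors together with Sylvester's criterion), which is precisely the point that needed care.
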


\begin{proof}
Since we do not know a closed-form expression of the nd parameter space for $\mathcal{N}(G)$, we take an indirect approach to prove the result. Recall that, by definition, the nd parameter space for $\mathcal{N}(G)$ is the set of real values such that, after the extension determined by the constraints C1 and C2, $\Sigma$ is positive definite. Therefore, the nd parameters $\mu$ can take values independently of the nd parameters in $\Sigma$. However, the nd parameters in $\Sigma$ cannot take values independently one of another because, otherwise, $\Sigma$ may not be positive definite. However, if the entries in the diagonal of $\Sigma$ take values in $(|V|-1, \infty)$ and the rest of the nd parameters in $\Sigma$ take values in $[-1,1]$, then the nd parameters in $\Sigma$ can take values independently one of another. To see it, note that in this case $\Sigma$ will always be Hermitian, strictly diagonally dominant, and with strictly positive diagonal entries, which implies that $\Sigma$ will always be positive definite \citep[Corollary 7.2.3]{HornandJohnson1985}.

The subset of the nd parameter space of $\mathcal{N}(G)$ described in the paragraph above has positive volume in $\mathbb{R}^d$ and, thus, it has positive Lebesgue measure wrt $\mathbb{R}^d$. Then, the nd parameter space of $\mathcal{N}(G)$ has positive Lebesgue measure wrt $\mathbb{R}^d$.
\end{proof}

\begin{lemma}\label{lem:polynomial}
Let $G$ be a covariance graph. For every $i, j \in V$ and $K \subseteq V \setminus ij$, there exists a real polynomial $S(i,j,K)$ in the nd parameters in the parameterization of the probability distributions in $\mathcal{N}(G)$ such that, for every $p \in \mathcal{N}(G)$, $i \ci_p j | K$ iff $S(i,j,K)$ vanishes for the nd parameter values coding $p$.
\end{lemma}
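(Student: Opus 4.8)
The plan is to express the conditional independence statement $i \ci_p j | K$ for a regular Gaussian distribution in terms of the partial covariance between $i$ and $j$ given $K$, and then to show that this quantity, up to a nonvanishing factor, is a real polynomial in the nd parameters. Recall that for a regular Gaussian probability distribution $p$ with covariance matrix $\Sigma$, the statement $i \ci_p j | K$ holds if and only if the partial covariance $\Sigma_{i,j \cdot K}$ vanishes, where $\Sigma_{i,j \cdot K} = \Sigma_{i,j} - \Sigma_{i,K} (\Sigma_{K,K})^{-1} \Sigma_{K,j}$ is the relevant entry of the Schur complement of $\Sigma_{K,K}$ in $\Sigma_{iKj,iKj}$. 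This is the standard Gaussian characterization and may be cited from \citep{Lauritzen1996} or \citep{Studeny2005}.

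First I would fix $i$, $j$, and $K$ and write $L = iKj$, so that the relevant submatrix is $\Sigma_{L,L}$. Using the matrix-theory facts already recalled in Appendix A, I would observe that when $\Sigma_{K,K}$ is invertible, the $(i,j)$ entry of the Schur complement equals
\[
\Sigma_{i,j \cdot K} = \frac{det(\Sigma_{iK, jK})}{det(\Sigma_{K,K})},
\]
where $\Sigma_{iK,jK}$ denotes the submatrix of $\Sigma$ with row set $iK$ and column set $jK$ (ordered consistently). This follows from the cofactor/Schur-complement identity for determinants. Since $p$ is regular Gaussian, $\Sigma$ is positive definite, hence every principal submatrix $\Sigma_{K,K}$ is positive definite and in particular $det(\Sigma_{K,K}) > 0$; therefore the denominator never vanishes over $\mathcal{N}(G)$. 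Consequently $i \ci_p j | K$ holds if and only if the numerator $det(\Sigma_{iK,jK})$ vanishes.

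I would then set $S(i,j,K) = det(\Sigma_{iK,jK})$. By the recursive cofactor expansion recalled in Appendix A, a determinant is a real polynomial in the entries of the matrix; and by the constraints C1 and C2, every entry $\Sigma_{a,b}$ is either identically $0$, or equal to a single nd parameter, or equal (by symmetry) to another entry that is an nd parameter. Substituting these identifications into $det(\Sigma_{iK,jK})$ yields a real polynomial purely in the nd parameters, which is exactly the required $S(i,j,K)$. Combining the two equivalences gives that, for every $p \in \mathcal{N}(G)$, $i \ci_p j | K$ iff $S(i,j,K)$ vanishes at the nd parameter values coding $p$, as claimed.

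The main obstacle I anticipate is bookkeeping rather than conceptual: one must be careful that the determinant formula $\Sigma_{i,j \cdot K} = det(\Sigma_{iK,jK})/det(\Sigma_{K,K})$ uses a consistent ordering of the index sets so that the sign is correct, and one must verify that the numerator is genuinely the right cofactor (the off-diagonal Schur entry, not the diagonal one). A clean way to sidestep sign subtleties is to note that the exact sign of $S(i,j,K)$ is irrelevant for the lemma, since we only care about whether $S(i,j,K)$ vanishes; any polynomial that agrees with $det(\Sigma_{iK,jK})$ up to sign suffices. The only substantive point requiring care is the nonvanishing of $det(\Sigma_{K,K})$ over all of $\mathcal{N}(G)$, which is guaranteed by positive definiteness of $\Sigma$.
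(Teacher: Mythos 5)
Your proposal is correct and takes essentially the same approach as the paper's own proof: both reduce $i \ci_p j | K$ to the vanishing of $det(\Sigma_{iK,jK})$ and then observe that, via the constraints C1 and C2, this determinant is a real polynomial in the nd parameters, which is taken as $S(i,j,K)$. The only immaterial difference is how that equivalence is derived: you go through the partial covariance and the Schur-complement identity with denominator $det(\Sigma_{K,K})$, whereas the paper uses the cofactor formula for $((\Sigma_{ijK,ijK})^{-1})_{i,j}$ with denominator $det(\Sigma_{ijK,ijK})$, both denominators being positive principal minors of the positive definite $\Sigma$.
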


\begin{proof}
Let $\Sigma$ denote the covariance matrix of $p$. Note that $i \ci_p j | K$ iff $((\Sigma_{ijK,ijK})^{-1})_{i,j}=0$ \citep[Proposition 5.2]{Lauritzen1996}. Recall that $((\Sigma_{ijK,ijK})^{-1})_{i,j}=(-1)^{\alpha_{i,j}} det(\Sigma_{iK,jK}) / det(\Sigma_{ijK,ijK})$ with $\alpha_{i,j} \in \{0,1\}$. Moreover, note that $det(\Sigma_{ijK,ijK}) > 0$ because $\Sigma$ is positive definite \citep[p. 237]{Studeny2005}. Then, $i \ci_p j | K$ iff $det(\Sigma_{iK,jK})=0$. Moreover, as noted in Section \ref{sec:preliminaries}, $det(\Sigma_{iK,jK})$ is a real polynomial in the entries of $\Sigma$. Thus, $i \ci_p j | K$ iff a real polynomial $R(i,j,K)$ in the entries of $\Sigma$ vanishes. Recall that each entry of $\Sigma$ that is not identically zero corresponds to one of the nd parameters in the parameterization of the probability distributions in $\mathcal{N}(G)$. Therefore, $R(i,j,K)$ can be expressed as a real polynomial $S(i,j,K)$ in the nd parameters. Therefore, $i \ci_p j | K$ iff $S(i,j,K)$ vanishes for the nd parameter values coding $p$.
\end{proof}

We interpret the polynomial in the previous lemma as a real function on a real Euclidean space that includes the nd parameter space for $\mathcal{N}(G)$. We say that the polynomial in the previous lemma is non-trivial if not all the values of the nd parameters are solutions to the polynomial. This is equivalent to the requirement that the polynomial is not identically zero.

\begin{lemma}\label{lem:f2}
Let $G$ be a covariance graph of dimension $d$. The subset of the nd parameter space for $\mathcal{N}(G)$ that corresponds to the probability distributions in $\mathcal{N}(G)$ that are not faithful to $G$ has zero Lebesgue measure wrt $\mathbb{R}^d$.
\end{lemma}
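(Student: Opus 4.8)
The goal is to show that the non-faithful probability distributions in $\mathcal{N}(G)$ form a Lebesgue-null subset of the nd parameter space. The natural strategy is to express non-faithfulness as a finite union of polynomial vanishing loci and then argue that each such locus is null. I would begin by observing that $p \in \mathcal{N}(G)$ fails to be faithful to $G$ precisely when $p$ satisfies some independence $i \ci_p j | K$ that is \emph{not} forced to hold by $G$, i.e. some independence with $i \nci_G j | K$. (Every independence that $G$ does dictate already holds in $p$ because $p$ is Markov wrt $G$.) Since $V$ is finite, there are only finitely many triples $(i, j, K)$ with $i, j \in V$ and $K \subseteq V \setminus ij$. So the non-faithful set is the finite union, over all such triples with $i \nci_G j | K$, of the sets $\{p \in \mathcal{N}(G) : i \ci_p j | K\}$.

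\textbf{Reducing to polynomials.} By Lemma \ref{lem:polynomial}, for each triple there is a real polynomial $S(i,j,K)$ in the nd parameters whose vanishing is equivalent to $i \ci_p j | K$. Hence each set in the union is the zero set of a polynomial intersected with the nd parameter space. A polynomial that is not identically zero has a zero set of Lebesgue measure zero in $\mathbb{R}^d$ (this is a standard fact: the vanishing locus of a nontrivial real polynomial is a proper algebraic variety, hence null). A finite union of null sets is null. Therefore the whole argument reduces to one key claim: for every triple $(i,j,K)$ with $i \nci_G j | K$, the polynomial $S(i,j,K)$ is \textbf{non-trivial}, i.e. not identically zero on the nd parameter space.

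\textbf{The main obstacle.} Establishing non-triviality of $S(i,j,K)$ whenever $i \nci_G j | K$ is the heart of the proof and the step I expect to be hardest. What must be shown is that, whenever $G$ does not entail the independence $i \ci_G j | K$, there is at least one choice of nd parameter values in the space (so, yielding a genuine positive-definite $\Sigma$) for which $S(i,j,K)$ does not vanish, equivalently $i \nci_p j | K$ for the corresponding $p$. The cleanest route is to invoke the existence of a faithful regular Gaussian distribution: by Theorem \ref{the:fcvg} there exists $q \in \mathcal{N}(G)$ faithful to $G$, so for $q$ we have $i \nci_q j | K$ exactly when $i \nci_G j | K$. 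This single faithful $q$ witnesses that $S(i,j,K)$ does not vanish at the nd parameter values coding $q$, which forces $S(i,j,K)$ to be non-trivial. The remaining care is to ensure that the null sets are measured against $\mathbb{R}^d$ in a way compatible with Lemma \ref{lem:f1}: since the nd parameter space has positive Lebesgue measure, removing a finite union of null algebraic loci still leaves a set of positive measure, so ``almost all'' distributions in $\mathcal{N}(G)$ are faithful.

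\textbf{Assembling the conclusion.} Putting these pieces together: the non-faithful subset is a finite union of zero sets of non-trivial polynomials, each of which is null in $\mathbb{R}^d$, hence the union is null. This is exactly the assertion of the lemma. Combined with Lemma \ref{lem:f1}, it yields the stronger measure-theoretic statement announced before the lemma, that almost every regular Gaussian distribution Markov wrt a covariance graph is in fact faithful to it.
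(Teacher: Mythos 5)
Your proof is correct and follows the same overall skeleton as the paper's: reduce non-faithfulness to a pairwise violation $i \ci_p j | K$ with $i \nci_G j | K$, invoke Lemma \ref{lem:polynomial} to express each such independence as the vanishing of a polynomial $S(i,j,K)$ in the nd parameters, argue non-triviality, and conclude via the fact that a non-trivial real polynomial has a null zero set and that a finite union of null sets is null. The one genuine divergence is the key step, non-triviality of $S(i,j,K)$. The paper simply cites an external result \citep[Theorem 3.1]{Kauermann1996}, whereas you derive it internally: any regular Gaussian $q$ faithful to $G$, whose existence is Theorem \ref{the:fcvg}, lies in $\mathcal{N}(G)$ (faithfulness forces $\Sigma_{A,B}=0$ for non-adjacent $A,B$, so the constraints C1--C2 are met) and satisfies $i \nci_q j | K$ whenever $i \nci_G j | K$, so the single point coding $q$ witnesses $S(i,j,K) \neq 0$ simultaneously for \emph{all} relevant triples. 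This is a clean and valid substitute, and it is not circular: Theorem \ref{the:fcvg} is proven via the DAG construction and Spirtes et al., independently of Appendix A (the appendix strengthens Theorem \ref{the:fcvg} but the lemma's proof does not feed back into it). What your route buys is self-containment within the paper's own results and a one-witness argument covering every polynomial at once; what the paper's citation buys is independence from the faithfulness theorem, so Lemma \ref{lem:f2} could in principle be used to give an alternative proof of Gaussian faithfulness rather than presuppose it. One small gloss worth tightening: your claim that non-faithfulness is \emph{precisely} a pairwise violation needs the reduction the paper spells out, namely that $I \nci_G J | K$ yields $i \nci_G j | K$ for some $i \in I$, $j \in J$ (take the last node of the connecting path in $I$ and the first node after it in $J$), while $I \ci_p J | K$ yields $i \ci_p j | K$ by symmetry and decomposition; only the containment of the non-faithful set in the union is actually needed, and this argument supplies it.
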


\begin{proof}
Recall that $\mathcal{N}(G)$ contains exactly the regular Gaussian distributions that are Markov wrt $G$. Therefore, for any probability distribution $p \in \mathcal{N}(G)$ not to be faithful to $G$, $p$ must satisfy some independence that is not entailed by $G$. That is, there must exist three disjoint subsets of $V$, here denoted as $I$, $J$ and $K$, such that $I \nci_G J | K$ but $I \ci_p J | K$. However, if $I \nci_G J | K$ then $i \nci_G j | K$ for some $i \in I$ and $j \in J$. Furthermore, if $I \ci_p J | K$ then $i \ci_p j | K$ by symmetry and decomposition. By Lemma \ref{lem:polynomial}, there exists a real polynomial $S(i,j,K)$ in the nd parameters in the parameterization of the probability distributions in $\mathcal{N}(G)$ such that, for every $q \in \mathcal{N}(G)$, $i \ci_q j | K$ iff $S(i,j,K)$ vanishes for the nd parameter values coding $q$. Furthermore, $S(i,j,K)$ is non-trivial \citep[Theorem 3.1]{Kauermann1996}. Let $sol(i, j, K)$ denote the set of solutions to the polynomial $S(i,j,K)$. Then, $sol(i, j, K)$ has zero Lebesgue measure wrt $\mathbb{R}^d$ because it consists of the solutions to a non-trivial real polynomial in real variables (the nd parameters) \citep{Okamoto1973}. Then, $sol= \bigcup_{\{I, J, K \subseteq V \: \mbox{\scriptsize disjoint} \: : \: I \nci_G J | K\}} \bigcup_{\{i \in I, j \in J \: : \: i \nci_G j | K\}} sol(i, j, K)$ has zero Lebesgue measure wrt $\mathbb{R}^d$, because the finite union of sets of zero Lebesgue measure has zero Lebesgue measure too. Consequently, the subset of the nd parameter space for $\mathcal{N}(G)$ that corresponds to the probability distributions in $\mathcal{N}(G)$ that are not faithful to $G$ has zero Lebesgue measure wrt $\mathbb{R}^d$ because it is contained in $sol$.
\end{proof}

In summary, it follows from Lemmas \ref{lem:f1} and \ref{lem:f2} that, in the measure-theoretic described, almost all the elements of the nd parameter space for $\mathcal{N}(G)$ correspond to probability distributions in $\mathcal{N}(G)$ that are faithful to $G$. Since this correspondence is clearly one-to-one, it follows that almost all the regular Gaussian distributions in $\mathcal{N}(G)$ are faithful to $G$. We think that this result can easily be extended to strictly positive discrete probability distributions with the help of the parameterizations proposed in \citep{Lupparellietal.2009}. We do not elaborate further on this issue in this paper though.

A word of caution is due at this point. It may be tempting to infer from the measure-theoretic results above that every regular Gaussian probability distribution $p$ one encounters in reality is almost surely faithful to its covariance graph $G$. This may lead one to conclude that our graphical criterion for reading from $G$ dependencies holding in $p$ is not needed, since $X \nci_G Y | Z$ almost surely implies $X \nci_p Y | Z$. This argument is valid if $p$ has been drawn from $\mathcal{N}(G)$ at random. However, we believe that $p$ is more likely to have been carefully engineered (e.g. by natural evolution in the case of the gene networks mentioned in Section \ref{sec:intro}) than to have been drawn at random. Consequently, and despite the measure-theoretic results above, we think that one cannot safely assume that $p$ is almost surely faithful to $G$. Hence, the need of the graphical criterion proposed in this paper.

\section*{Acknowledgments}

We thank the anonymous reviewers and the editor for their thorough review of this manuscript. We are particularly grateful to the second reviewer for suggesting us to use copulas in the proofs of Theorems \ref{the:fcg2} and \ref{the:fcvg}. This work is funded by the Center for Industrial Information Technology (CENIIT) and a so-called career contract at Link\"oping University.

\end{document}